\newtheorem{theorem}{Theorem}
\icmltitlerunning{Combining Model-Based and Model-Free Updates for Trajectory-Centric Reinforcement Learning}
\begin{document} 

\twocolumn[
\icmltitle{Combining Model-Based and Model-Free Updates for Trajectory-Centric\\Reinforcement Learning}

\icmlsetsymbol{equal}{*}

\begin{icmlauthorlist}
\icmlauthor{Yevgen Chebotar\textsuperscript{*}}{usc,mpi}
\icmlauthor{Karol Hausman\textsuperscript{*}}{usc}
\icmlauthor{Marvin Zhang\textsuperscript{*}}{berkeley}
\icmlauthor{Gaurav Sukhatme}{usc}
\icmlauthor{Stefan Schaal}{usc,mpi}
\icmlauthor{Sergey Levine}{berkeley} 
\end{icmlauthorlist}

\icmlaffiliation{usc}{University of Southern California, Los Angeles, CA, USA}

\icmlaffiliation{mpi}{Max Planck Institute for Intelligent Systems, T\"{u}bingen, Germany}

\icmlaffiliation{berkeley}{University of California Berkeley, Berkeley, CA, USA}

\icmlcorrespondingauthor{Yevgen Chebotar}{ychebota@usc.edu}

\icmlkeywords{}

\vskip 0.3in
]

\printAffiliationsAndNotice{\icmlEqualContribution}

\begin{abstract}
Reinforcement learning algorithms for real-world robotic applications must be able to handle complex, unknown dynamical systems while maintaining data-efficient learning. These requirements are handled well by model-free and model-based RL approaches, respectively.
In this work, we aim to combine the advantages of these approaches. By focusing on time-varying linear-Gaussian policies, we enable a model-based algorithm based on the linear-quadratic regulator that can be integrated into the model-free framework of path integral policy improvement.
We can further combine our method with guided policy search to train arbitrary parameterized policies such as deep neural networks.
Our simulation and real-world experiments demonstrate that this method can solve challenging manipulation tasks with comparable or better performance than model-free methods while maintaining the sample efficiency of model-based methods.
\end{abstract}
\vspace{-0.5cm}

\section{Introduction}
\label{sec:intro}

Reinforcement learning (RL) aims to enable automatic acquisition of behavioral skills, which can be crucial for robots and other autonomous systems to behave intelligently in unstructured real-world environments. However, real-world applications of RL have to contend with two often opposing requirements: data-efficient learning and the ability to handle complex, unknown dynamical systems that might be difficult to model. Real-world physical systems, such as robots, are typically costly and time consuming to run, making it highly desirable to learn using the lowest possible number of real-world trials. Model-based methods tend to excel at this~\cite{policysearch}, but suffer from significant bias, since complex unknown dynamics cannot always be modeled accurately enough to produce effective policies. Model-free methods have the advantage of handling arbitrary dynamical systems with minimal bias, but tend to be substantially less sample-efficient~\cite{kbp-rlrs-13,slmja-trpo-15}. Can we combine the efficiency of model-based algorithms with the final performance of model-free algorithms in a method that we can practically use on real-world physical systems?

\begin{figure}
\centering
\setlength{\tabcolsep}{2pt}
\begin{tabular}{r l}
\begin{adjustbox}{valign=t}
  \includegraphics[width=0.52\columnwidth]{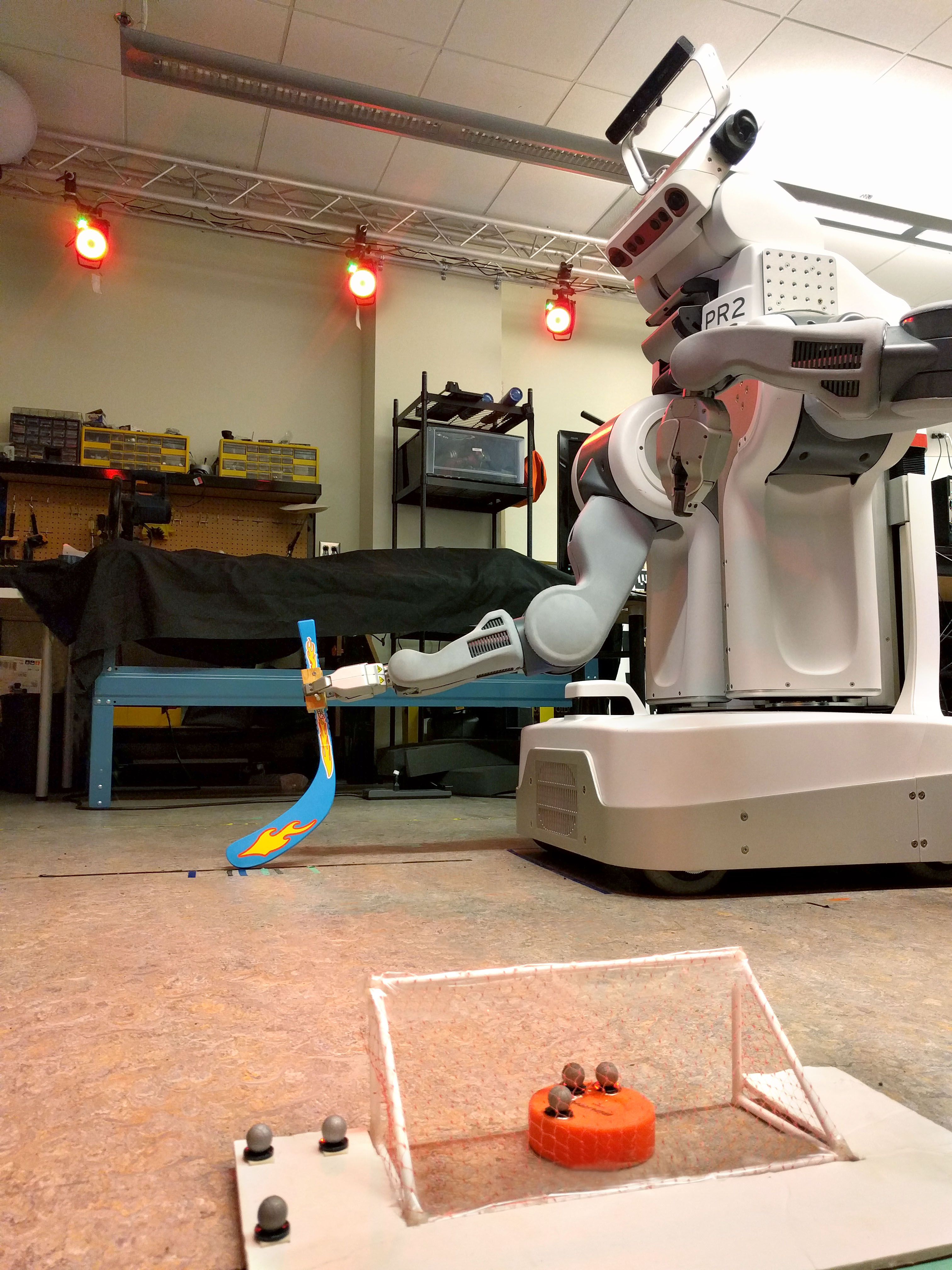}
\end{adjustbox}
&
\begin{adjustbox}{valign=t}
\begin{tabular}{@{}c@{}}
  \includegraphics[width=0.451\columnwidth]{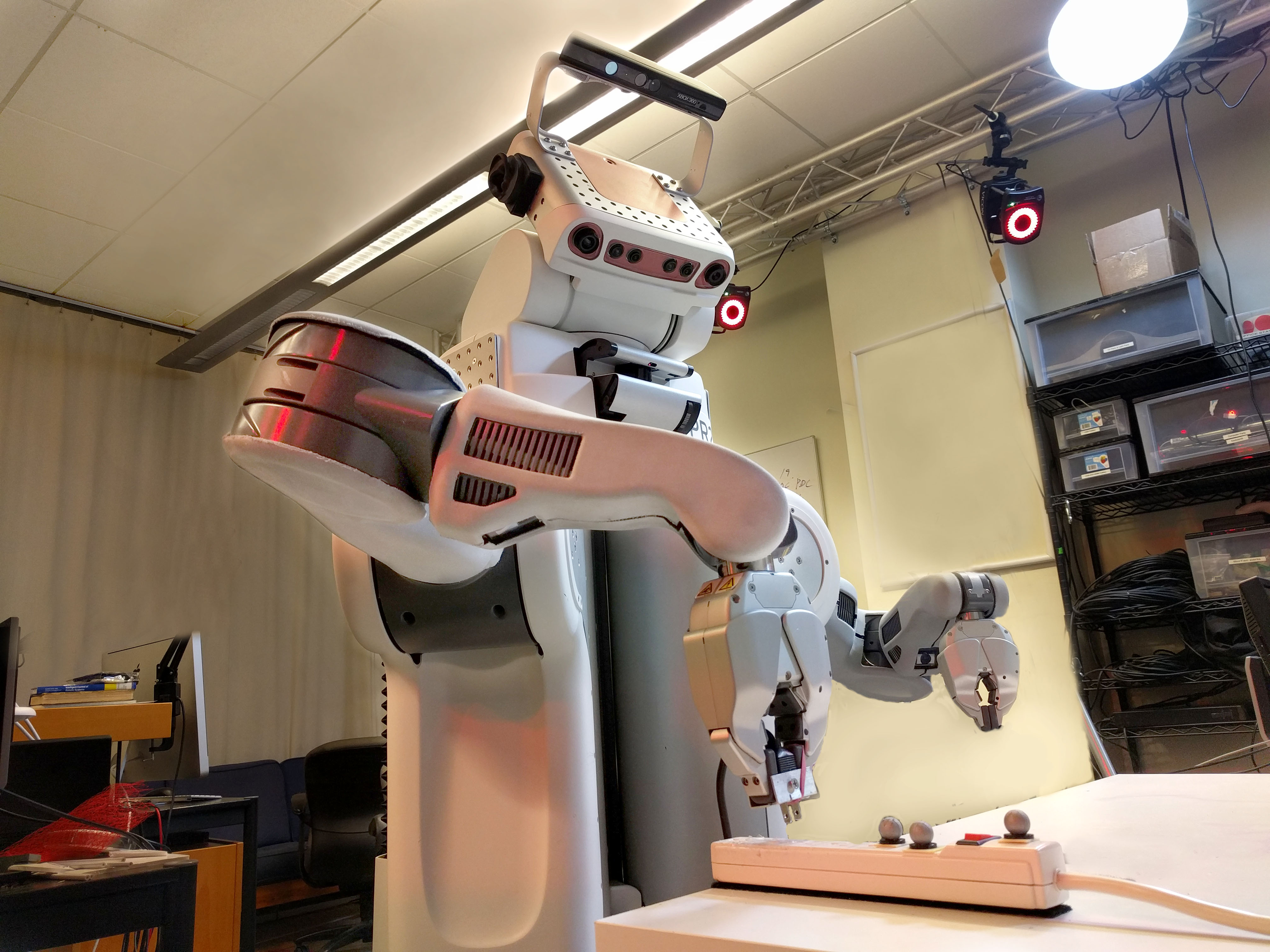}
  \\[0.0ex]
  \includegraphics[width=0.451\columnwidth]{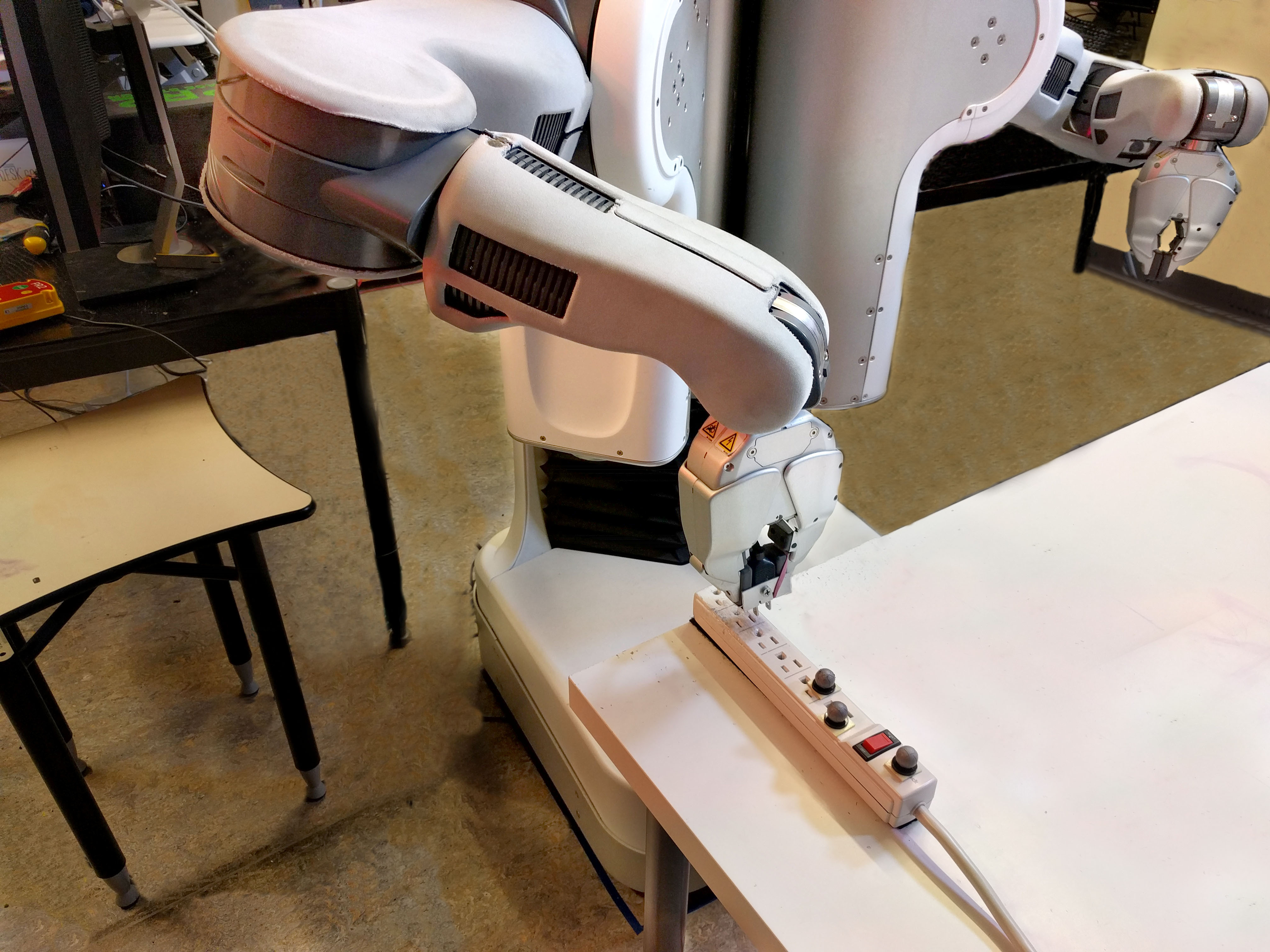}
  \end{tabular}
\end{adjustbox}
\end{tabular}
\vspace{-10pt}
\caption{Real robot tasks used to evaluate our method. Left: The hockey task which involves discontinuous dynamics. Right: The power plug task which requires high level of precision. Both of these tasks are learned from scratch without demonstrations.}
\label{fig:cover}
\vspace{-1pt}
\end{figure}

As we will discuss in Section~\ref{sec:related}, many prior methods that combine model-free and model-based techniques achieve only modest gains in efficiency or performance~\cite{stochastic_value_gradients,GuLSL16}. In this work, we aim to develop a method in the context of a specific policy representation: time-varying linear-Gaussian controllers. The structure of these policies provides us with an effective option for model-based updates via iterative linear-Gaussian dynamics fitting~\cite{LevineA14}, as well as a simple option for model-free updates via the path integral policy improvement (PI$^2$) algorithm~\cite{TheodorouBS10}.

Although time-varying linear-Gaussian (TVLG) policies are not as powerful as representations such as deep neural networks~\cite{mnih_et_al_atari,lhphe-ccdrl-16} or RBF networks~\cite{DeisenrothRF11}, they can represent arbitrary trajectories in continuous state-action spaces. Furthermore, prior work on guided policy search (GPS) has shown that TVLG policies can be used to train general-purpose parameterized policies, including deep neural network policies, for tasks involving complex sensory inputs such as vision~\cite{LevineA14,Levine:2016}. This yields a general-purpose RL procedure with favorable stability and sample complexity compared to fully model-free deep RL methods~\cite{montgomery_ajay_icra_paper}.

The main contribution of this paper is a procedure for optimizing TVLG policies that integrates both fast model-based updates via iterative linear-Gaussian model fitting and corrective model-free updates via the PI$^2$ framework. The resulting algorithm, which we call PILQR, combines the efficiency of model-based learning with the generality of model-free updates and can solve complex continuous control tasks that are infeasible for either linear-Gaussian models or PI$^2$ by itself, while remaining orders of magnitude more efficient than standard model-free RL. We integrate this approach into GPS to train deep neural network policies and present results both in simulation and on a real robotic platform. Our real-world results demonstrate that our method can learn complex tasks, such as hockey and power plug plugging (see Figure~\ref{fig:cover}), each with less than an hour of experience and no user-provided demonstrations.

\section{Related Work}
\label{sec:related}
The choice of policy representation has often been a crucial component in the success of a RL procedure~\cite{policysearch,kbp-rlrs-13}. 
Trajectory-centric representations, such as splines~\cite{peters_schaal_2008_reinforcement_learning_pg}, dynamic movement primitives~\cite{rl_dmps}, and TVLG controllers~\cite{peters_linear_gaussian_controller_paper,LevineA14} have proven particularly popular in robotics, where they can be used to represent cyclical and episodic motions and are amenable to a range of efficient optimization algorithms.
In this work, we build on prior work in trajectory-centric RL to devise an algorithm that is both sample-efficient and able to handle a wide class of tasks, all while not requiring human demonstration initialization.

More general representations for policies, such as deep neural networks, have grown in popularity recently due to their ability to process complex sensory input \cite{mnih_et_al_atari,lhphe-ccdrl-16,Levine:2016} and represent more complex strategies that can succeed from a variety of initial conditions~\cite{slmja-trpo-15,trpo-gae}. While trajectory-centric representations are more limited in their representational power, they can be used as an intermediate step toward efficient training of general parameterized policies using the GPS framework~\cite{Levine:2016}. Our proposed trajectory-centric RL method can also be combined with GPS to supervise the training of complex neural network policies. Our experiments demonstrate that this approach is several orders of magnitude more sample-efficient than direct model-free deep RL algorithms.

Prior algorithms for optimizing trajectory-centric policies can be categorized as model-free methods~\cite{TheodorouBS10,PetersMA10}, methods that use global models~\cite{pilco,pddp}, and methods that use local models~\cite{LevineA14,peters_linear_gaussian_controller_paper,peters_quadratic_models_paper}. Model-based methods typically have the advantage of being fast and sample-efficient, at the cost of making simplifying assumptions about the problem structure such as smooth, locally linearizable dynamics or continuous cost functions.
Model-free algorithms avoid these issues by not modeling the environment explicitly and instead improving the policy directly based on the returns, but this often comes at a cost in sample efficiency. Furthermore, many of the most popular model-free algorithms for trajectory-centric policies use example demonstrations to initialize the policies, since model-free methods require a large number of samples to make large, global changes to the behavior~\cite{TheodorouBS10,PetersMA10,peter_pastor_demonstration}.

Prior work has sought to combine model-based and model-free learning in several ways. \citet{Farshidianetal} also use LQR and PI$^2$, but do not combine these methods directly into one algorithm, instead using LQR to produce a good initialization for PI$^2$. Their work assumes the existence of a known model, while our method uses estimated local models. A number of prior methods have also looked at incorporating models to generate additional synthetic samples for model-free learning~\cite{dyna-q,GuLSL16}, as well as using models for improving the accuracy of model-free value function backups~\cite{stochastic_value_gradients}. Our work directly combines model-based and model-free updates into a single trajectory-centric RL method without using synthetic samples that degrade with modeling errors.
% \vspace{-0.8cm}

\section{Preliminaries}
% \vspace{-0.1cm}
The goal of policy search methods is to optimize the parameters $\theta$ of a policy $p(\mathbf{u}_t | \mathbf{x}_t)$, which defines a probability distribution over 
actions $\mathbf{u}_t$ conditioned on the system state $\mathbf{x}_t$ at each time step $t$ of a task execution. Let $\tau = (\mathbf{x}_1, \mathbf{u}_1, \dots,  \mathbf{x}_T, \mathbf{u}_T)$ be a trajectory of states and actions. Given a cost function $c(\mathbf{x}_t, \mathbf{u}_t)$, we define the trajectory cost as $c(\tau)=\sum_{t=1}^T c(\mathbf{x}_t, \mathbf{u}_t)$. The policy is optimized with respect to the expected cost of the policy
\vspace{-0.3cm}
\[
J(\theta) = \mathbb{E}_{p}\left[c(\tau)\right] = \int c(\tau) p (\tau) d\tau\,,
\vspace{-5pt}
\]
where $p(\tau)$ is the policy trajectory distribution given the system dynamics $p\left(\mathbf{x}_{t+1} | \mathbf{x}_{t}, \mathbf{u}_{t}\right)$
\vspace{-6pt}
\[
p (\tau) = p(\mathbf{x}_1) \prod_{t=1}^{T} p\left(\mathbf{x}_{t+1} | \mathbf{x}_{t}, \mathbf{u}_{t}\right) p(\mathbf{u}_t | \mathbf{x}_t)\,.
\vspace{-4pt}
\]
One policy class that allows us to employ an efficient model-based update is the TVLG controller $p(\mathbf{u}_t | \mathbf{x}_t) = \mathcal{N}(\mathbf{K}_{t} \mathbf{x}_t + \mathbf{k}_{t}, \mathbf{\Sigma}_{t})$.
In this section, we present the model-based and model-free algorithms that form the constituent parts of our hybrid method. The model-based method is an extension of a KL-constrained LQR algorithm~\cite{LevineA14}, which we shall refer to as LQR with fitted linear models (LQR-FLM). The model-free method is a PI$^2$ algorithm with per-time step KL-divergence constraints that is derived in previous work~\cite{chebotar-icra2017}.

\subsection{Model-Based Optimization of TVLG Policies}
\label{sec:ilqr}

The model-based method we use is based on the iterative linear-quadratic regulator (iLQR) and builds on prior work~\cite{LevineA14,synthesis}. We provide a full description and derivation in Appendix~\ref{app:lqr_flm}.

We use samples to fit a TVLG dynamics model $p(\mathbf{x}_{t+1}|\mathbf{x}_t,\mathbf{u}_t)=\mathcal{N}(\mathbf{f}_{\mathbf{x},t}\mathbf{x}_t+\mathbf{f}_{\mathbf{u},t}\mathbf{u}_t,\mathbf{F}_t)$ and assume a twice-differentiable cost function. As in \citet{synthesis}, we can compute a second-order Taylor approximation of our Q-function and optimize this with respect to $\mathbf{u}_t$ to find the optimal action at each time step $t$.
To deal with unknown dynamics, \citet{LevineA14} impose a KL-divergence constraint between the updated policy $p^{(i)}$ and previous policy $p^{(i-1)}$ to stay within the space of trajectories where the dynamics model is approximately correct. We similarly set up our optimization as

\vspace{-15pt}
{\small
\begin{align}
    \!\!\min_{p^{(i)}}~~\mathbb{E}_{p^{(i)}}[Q(\mathbf{x}_t,\mathbf{u}_t)]\,
    s.t.~\mathbb{E}_{p^{(i)}}\left[D_{\text{KL}}(p^{(i)}\|p^{(i-1)})\right]\leq\epsilon_t\,.\label{eq:lqr_optim}
\end{align}
}The main difference from \citet{LevineA14}
is that we enforce separate KL constraints for each linear-Gaussian policy rather than a single constraint on the induced trajectory distribution (i.e., compare Eq.~(\ref{eq:lqr_optim}) to the first equation in Section~3.1 of~\citet{LevineA14}).

LQR-FLM has substantial efficiency benefits over model-free algorithms. However, as our experimental results in Section~\ref{sec:experiments} show, the performance of LQR-FLM is highly dependent on being able to model the system dynamics accurately, causing it to fail for more challenging tasks.

\subsection{Policy Improvement with Path Integrals}
\label{pi2}

PI$^2$ is a model-free RL algorithm based on stochastic optimal control.
A detailed derivation of this method can be found in~\citet{TheodorouBS10}.

Each iteration of PI$^2$ involves generating $N$ trajectories by running the current policy. Let $S(\mathbf{x}_{i,t},\mathbf{u}_{i,t}) = c(\mathbf{x}_{i,t}, \mathbf{u}_{i,t}) + \sum^T_{j=t+1} c(\mathbf{x}_{i,j}, \mathbf{u}_{i,j})$ be the cost-to-go of trajectory $i \in \{1,\ldots,N\}$ starting in state $\mathbf{x}_{i,t}$ by performing action $\mathbf{u}_{i,t}$ and following the policy $p(\mathbf{u}_t | \mathbf{x}_t)$ afterwards. Then, we can compute  probabilities $P(\mathbf{x}_{i,j}, \mathbf{u}_{i,j})$ for each trajectory starting at time step $t$

\vspace{-10pt}
{\small
\begin{equation}
P(\mathbf{x}_{i,t}, \mathbf{u}_{i,t}) = \frac{ \exp \left(-\frac{1}{\eta_t} S(\mathbf{x}_{i,t}, \mathbf{u}_{i,t}) \right)}{\int \exp\left(-\frac{1}{\eta_t} S(\mathbf{x}_{i,t}, \mathbf{u}_{i,t})\right) d \mathbf{u}_{i,t}}\,.\label{eqn:pi2update}
\end{equation}
}The probabilities follow from the Feynman-Kac theorem applied to stochastic optimal control~\cite{TheodorouBS10}.
The intuition is that the trajectories with lower costs receive higher probabilities, and the policy distribution shifts towards a lower cost trajectory region. The costs are scaled by $\eta_t$, which can be interpreted as the temperature of a soft-max distribution. This is similar to the dual variables $\eta_t$ in LQR-FLM in that they control the KL step size, however they are derived and computed differently. After computing the new probabilities $P$, we update the policy distribution by reweighting each sampled control $\mathbf{u}_{i,t}$ by $P(\mathbf{x}_{i,t}, \mathbf{u}_{i,t})$ and updating the policy parameters by a maximum likelihood estimate~\cite{chebotar-icra2017}.

To relate PI$^2$ updates to LQR-FLM optimization of a constrained objective, which is necessary for combining these methods, we can formulate the following theorem. 
\begin{theorem}
\label{theo:pi2}
The PI$^2$ update corresponds to a KL-constrained minimization of the expected cost-to-go $S(\mathbf{x}_{t},\mathbf{u}_{t}) = \sum^T_{j=t} c(\mathbf{x}_{j}, \mathbf{u}_{j})$ at each time step $t$
\vspace{-2pt}
\begin{align*}
\!\min_{p^{(i)}}~\mathbb{E}_{p^{(i)}}[S(\mathbf{x}_{t},\mathbf{u}_{t})]~
s.t.~\mathbb{E}_{p^{(i-1)}}\!\!\left[D_\text{KL} \left(p^{(i)}\|\, p^{(i-1)}\right)\right]\leq \epsilon\,,
\vspace{-7pt}
\end{align*}
where $\epsilon$ is the maximum KL-divergence between the new policy $p^{(i)}\left (  \mathbf{u}_t  | \mathbf{x}_t \right)$ and the old policy $p^{(i-1)}\left (  \mathbf{u}_t  | \mathbf{x}_t \right)$.
\end{theorem}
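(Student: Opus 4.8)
The plan is to solve the constrained variational problem in closed form via Lagrangian duality and then identify the resulting optimal distribution with the PI$^2$ reweighting rule. First I would form the Lagrangian of the optimization, introducing the multiplier $\eta_t$ for the KL constraint together with an additional multiplier $\lambda$ enforcing normalization $\int p^{(i)}(\mathbf{u}_t|\mathbf{x}_t)\,d\mathbf{u}_t = 1$. Since both the objective $\mathbb{E}_{p^{(i)}}[S(\mathbf{x}_t,\mathbf{u}_t)]$ and the KL term decompose as integrals against the conditional action density at each state, I would treat the problem pointwise in $\mathbf{x}_t$: the state marginal under $p^{(i-1)}$ enters only as a nonnegative weight and does not couple distinct states, so it suffices to minimize over the action-conditional $p^{(i)}(\mathbf{u}_t|\mathbf{x}_t)$ separately at each $\mathbf{x}_t$.

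Next I would take the functional derivative of the Lagrangian with respect to $p^{(i)}(\mathbf{u}_t|\mathbf{x}_t)$ and set it to zero, obtaining a stationarity condition of the form $S(\mathbf{x}_t,\mathbf{u}_t) + \eta_t\big(\log(p^{(i)}/p^{(i-1)}) + 1\big) + \lambda = 0$. Solving for $p^{(i)}$ and eliminating the constant through normalization yields the closed-form optimizer
\[
p^{(i)}(\mathbf{u}_t|\mathbf{x}_t) = \frac{p^{(i-1)}(\mathbf{u}_t|\mathbf{x}_t)\exp\!\left(-\tfrac{1}{\eta_t}S(\mathbf{x}_t,\mathbf{u}_t)\right)}{\int p^{(i-1)}(\mathbf{u}_t|\mathbf{x}_t)\exp\!\left(-\tfrac{1}{\eta_t}S(\mathbf{x}_t,\mathbf{u}_t)\right)d\mathbf{u}_t}\,,
\]
with $\eta_t\geq 0$ selected so that the KL constraint is active at $\epsilon$. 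Convexity of the KL term in $p^{(i)}$ together with linearity of the cost objective guarantees that this stationary point is the global minimizer, so no second-order check beyond this is needed.

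Finally I would connect this analytic solution to the sample-based PI$^2$ update in \eqref{eqn:pi2update}. The reweighting $P(\mathbf{x}_{i,t},\mathbf{u}_{i,t})\propto \exp(-\tfrac{1}{\eta_t}S)$ is applied to controls $\mathbf{u}_{i,t}$ that were themselves drawn from the old policy $p^{(i-1)}$; hence the reweighted sample set is exactly an empirical, importance-weighted approximation of the closed-form $p^{(i)}$ above, and the weighted maximum-likelihood fit of the new TVLG policy recovers this optimizer in the large-sample limit. I expect the main obstacle to be the asymmetry between the two expectations: the cost is taken under the new distribution $p^{(i)}$ while the KL constraint is averaged under the old distribution $p^{(i-1)}$. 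Making the pointwise decomposition rigorous — arguing that the mismatched state marginals do not alter the per-state optimizer — is the delicate step, and I would handle it by fixing the conditioning state and optimizing the action-conditional at each $\mathbf{x}_t$ independently, which is precisely the regime in which the PI$^2$ probabilities of \eqref{eqn:pi2update} are defined.
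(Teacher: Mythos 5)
Your proposal is correct and follows essentially the same route as the paper's own proof in Appendix B: form the Lagrangian with multiplier $\eta_t$, take the (functional) derivative with respect to $p^{(i)}$, exponentiate the stationarity condition, and normalize to obtain $p^{(i)}\propto p^{(i-1)}\exp(-\tfrac{1}{\eta_t}S)$, identifying $\eta_t$ as the dual variable and the result with the PI$^2$ reweighting of Eq.~(\ref{eqn:pi2update}). You are in fact slightly more careful than the paper (explicit normalization multiplier, retaining the $+1$ term, noting convexity, and flagging the $p^{(i)}$-vs-$p^{(i-1)}$ expectation asymmetry); the only presentational difference is that the paper first derives the single-path update and then averages the exponentiated costs-to-go over all paths emanating from $(\mathbf{x}_t,\mathbf{u}_t)$, yielding the $\mathbb{E}_{p^{(i-1)}}[\exp(-\tfrac{1}{\eta_t}S)]$ factor in Eq.~(\ref{pi2_update}), which you instead recover at the sample-based level.
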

\begin{proof}
The Lagrangian of this problem is given by
\vspace{-1pt}
{\small
\begin{align*}
\mathcal{L}(p^{(i)}\!\!, \eta_t) \!=\! \mathbb{E}_{p^{(i)}}\![S(\mathbf{x}_{t},\mathbf{u}_{t})] \!+\! \eta_t \mathbb{E}_{p^{(i-1)}}\!\left[\! D_\text{KL} \!\left(p^{(i)}\|\, p^{(i-1)}\!\right)\! \!-\! \epsilon \right].
\end{align*}
}By minimizing the Lagrangian with respect to $p^{(i)}$ we can find its relationship to $p^{(i-1)}$ (see Appendix \ref{app:pi2_proof}), given by
{\small
\begin{align}
\!\!\!p^{(i)}\!\left(  \mathbf{u}_t  | \mathbf{x}_t \right) 
\!\propto\! p^{(i-1)}\!\left(\mathbf{u}_t| \mathbf{x}_t \right) \mathbb{E}_{p^{(i-1)}}\!\!\left[\exp \left(\!-\frac{1}{\eta_t} S(\mathbf{x}_{t},\mathbf{u}_{t}) \right)\!\right] \!. \!\!\label{pi2_update}
\end{align}
}This gives us an update rule for $p^{(i)}$ that corresponds exactly to reweighting the controls from the previous policy $p^{(i-1)}$ based on their probabilities $P(\mathbf{x}_{t},\mathbf{u}_{t})$ described earlier. The temperature $\eta_t$ now corresponds to the dual variable of the KL-divergence constraint.
\end{proof}
The temperature $\eta_t$ can be estimated at each time step separately by optimizing the dual function
\begin{align}
g(\eta_t) \!=\! \eta_t \epsilon \!+\! \eta_t \log \mathbb{E}_{p^{(i-1)}} \left [ \exp \left(-\frac{1}{\eta_t} S(\mathbf{x}_{t},\mathbf{u}_{t}) \right) \right ] \!, \!\label{eq:reps_dual}
\end{align}
with derivation following from~\citet{PetersMA10}.

PI$^2$ was used by \citet{chebotar-icra2017} to solve several challenging robotic tasks such as door opening and pick-and-place, where they achieved better final performance than LQR-FLM. However, due to its greater sample complexity, PI$^2$ required initialization from demonstrations.

% \vspace{-0.3cm}
\section{Integrating Model-Based Updates into PI$^2$}

Both PI$^2$ and LQR-FLM can be used to learn TVLG policies and both have their strengths and weaknesses. In this section, we first show how the PI$^2$ update can be broken up into two parts, with one part using a model-based cost approximation and another part using the residual cost error after this approximation. Next, we describe our method for integrating model-based updates into PI$^2$ by using our extension of LQR-FLM to optimize the linear-quadratic cost approximation and performing a subsequent update with PI$^2$ on the residual cost. We demonstrate in Section~\ref{sec:experiments} that our method combines the strengths of PI$^2$ and LQR-FLM while compensating for their weaknesses.

\subsection{Two-Stage PI$^2$ update}
\label{2stage_pi2}
To integrate a model-based optimization into PI$^2$, we can divide it into two steps. Given an approximation $\hat{c}(\mathbf{x}_t, \mathbf{u}_t)$ of the real cost $c(\mathbf{x}_t, \mathbf{u}_t)$ and the residual cost $\tilde{c}(\mathbf{x}_t, \mathbf{u}_t) = c(\mathbf{x}_t, \mathbf{u}_t) - \hat{c}(\mathbf{x}_t, \mathbf{u}_t)$,
let $\hat{S}_t = \hat{S}(\mathbf{x}_t, \mathbf{u}_t)$ be the approximated cost-to-go of a trajectory starting with state $\mathbf{x}_t$ and action $\mathbf{u}_t$,  and $\tilde{S}_t = \tilde{S}(\mathbf{x}_t, \mathbf{u}_t)$ be the residual of the real cost-to-go $S(\mathbf{x}_t, \mathbf{u}_t)$ after approximation. We can rewrite the PI$^2$ policy update rule from Eq.~(\ref{pi2_update}) as
% \vspace{-0.5cm}
\begin{align}
    p^{(i)}&\left (  \mathbf{u}_t  | \mathbf{x}_t \right) \nonumber \\
    &\propto p^{(i-1)}\left(\mathbf{u}_t| \mathbf{x}_t \right) \mathbb{E}_{p^{(i-1)}}\left[\exp \left(-\frac{1}{\eta_t} \left(\hat{S}_t + \tilde{S}_t\right) \right)\right] \nonumber \\
    &\propto \hat{p}\left(\mathbf{u}_t| \mathbf{x}_t \right) \mathbb{E}_{p^{(i-1)}}\left[\exp \left(-\frac{1}{\eta_t}  \tilde{S}_t \right)\right], \label{eq:pi2_res_update}
\end{align}
% \vspace{-0.2cm}
where $\hat{p}\left(\mathbf{u}_t| \mathbf{x}_t \right)$ is given by
\begin{align}
\hat{p}\left(\mathbf{u}_t| \mathbf{x}_t \right) \propto p^{(i-1)}\left(\mathbf{u}_t| \mathbf{x}_t \right) \mathbb{E}_{p^{(i-1)}}\left[\exp \left(-\frac{1}{\eta_t} \hat{S}_t  \right)\right]\!. \label{approx_update}
\end{align}
Hence, by decomposing the cost into its approximation and the residual approximation error, the PI$^2$ update can be split into two steps: (1) update using the approximated costs $\hat{c}(\mathbf{x}_t, \mathbf{u}_t)$ and samples from the old policy $p^{(i-1)}\left(\mathbf{u}_t| \mathbf{x}_t \right)$ to get $\hat{p}\left (  \mathbf{u}_t  | \mathbf{x}_t \right)$; (2) update $p^{(i)}\left (  \mathbf{u}_t  | \mathbf{x}_t \right)$ using the residual costs $\tilde{c}(\mathbf{x}_t, \mathbf{u}_t)$ and samples from $\hat{p}\left(\mathbf{u}_t| \mathbf{x}_t \right)$.

\subsection{Model-Based Substitution with LQR-FLM}

We can use Theorem~(\ref{theo:pi2}) to rewrite Eq.~(\ref{approx_update}) as a constrained optimization problem
\begin{align}
\min_{\hat{p}}~~\mathbb{E}_{\hat{p}}\left[\hat{S}(\mathbf{x}_t, \mathbf{u}_t)\right]\nonumber
s.t.~~\mathbb{E}_{p^{(i-1)}}\left[D_\text{KL} \left(\hat{p}\|\, p^{(i-1)}\right)\right]\leq \epsilon\,.
\end{align}
Thus, the policy $\hat{p}\left(\mathbf{u}_t| \mathbf{x}_t \right)$ can be updated using any algorithm that can solve this optimization problem. By choosing a model-based approach for this, we can speed up the learning process significantly.
Model-based methods are typically constrained to some particular cost approximation, however, PI$^2$ can accommodate any form of $\tilde{c}\left(\mathbf{x}_t, \mathbf{u}_t\right)$ and thus will handle arbitrary cost residuals.

LQR-FLM solves the type of constrained optimization problem in Eq.~(\ref{eq:lqr_optim}), which matches the optimization problem needed to obtain $\hat{p}$, where the cost-to-go $\hat{S}$ is approximated with a quadratic cost and a linear-Gaussian dynamics model.\footnote{In practice, we make a small modification to the problem in Eq.~(\ref{eq:lqr_optim}) so that the expectation in the constraint is evaluated with respect to the new distribution $\hat{p}(\mathbf{x}_t)$ rather than the previous one $p^{(i-1)}(\mathbf{x}_t)$. This modification is heuristic and no longer aligns with Theorem~(\ref{theo:pi2}), but works better in practice.}
We can thus use LQR-FLM to perform our first update, which enables greater efficiency but is susceptible to modeling errors when the fitted local dynamics are not accurate, such as in discontinuous systems. We can use a PI$^2$ optimization on the residuals to correct for this bias.

% \vspace{-0.2cm}
\subsection{Optimizing Cost Residuals with PI$^2$}

In order to perform a PI$^2$ update on the residual costs-to-go $\tilde{S}$, we need to know what $\hat{S}$ is for each sampled trajectory. That is, what is the cost-to-go that is actually used by LQR-FLM to make its update?
The structure of the algorithm implies a specific cost-to-go formulation for a given trajectory -- namely, the sum of quadratic costs obtained by running the same policy under the TVLG dynamics used by LQR-FLM. A given trajectory can be viewed as being generated by a deterministic policy conditioned on a particular noise realization $\xi_{i,1},\dots,\xi_{i,T}$, with actions given by
\begin{align}
\mathbf{u}_{i,t} = \mathbf{K}_t \mathbf{x}_{i,t} + \mathbf{k}_t + \sqrt{\mathbf{\Sigma}_t} \xi_{i,t}\,, \label{eq:reparam}
\end{align}
\noindent where $\mathbf{K}_t$, $\mathbf{k}_t$, and $\mathbf{\Sigma_t}$ are the parameters of $p^{(i-1)}$. We can therefore evaluate $\hat{S}(\mathbf{x}_t,\mathbf{u}_t)$ by simulating this deterministic controller from $(\mathbf{x}_t,\mathbf{u}_t)$ under the fitted TVLG dynamics and evaluating its time-varying quadratic cost, and then plugging these values into the residual cost.

In addition to the residual costs $\tilde{S}$ for each trajectory, the PI$^2$ update also requires control samples from the updated LQR-FLM policy $\hat{p}\left (\mathbf{u}_t | \mathbf{x}_t \right)$. Although we have the updated LQR-FLM policy, we only have samples from the old policy $p^{(i-1)}\left (\mathbf{u}_t |\mathbf{x}_t \right)$. However, we can apply a form of the re-parametrization trick~\cite{KingmaW13} and again use the stored noise realization of each trajectory $\xi_{t,i}$ to evaluate what the control would have been for that sample under the LQR-FLM policy $\hat{p}$. The expectation of the residual cost-to-go in Eq.~(\ref{eq:pi2_res_update}) is taken with respect to the old policy distribution $p^{(i-1)}$. Hence, we can reuse the states $\mathbf{x}_{i,t}$ and their corresponding noise $\xi_{i,t}$ that was sampled while rolling out the previous policy $p^{(i-1)}$ and evaluate the new controls according to $\mathbf{\hat{u}}_{i,t} = \mathbf{\hat{K}}_t \mathbf{x}_{i,t} + \mathbf{\hat{k}}_t + \sqrt{\mathbf{\hat{\Sigma}}_t} \xi_{i,t}$. This linear transformation on the sampled control provides unbiased samples from $\hat{p}(\mathbf{u}_t|\mathbf{x}_t)$. After transforming the control samples, they are reweighted according to their residual costs and plugged into the PI$^2$ update in Eq.~(\ref{eqn:pi2update}).
% \vspace{-0.2cm}

% \vspace{-0.48cm}
\subsection{Summary of PILQR algorithm}
\label{sec:step-adj}

Algorithm \ref{algo:pilqr} summarizes our method for combining LQR-FLM and PI$^2$ to create a hybrid model-based and model-free algorithm. 
After generating a set of trajectories by running the current policy (line 2), we fit TVLG dynamics and compute the quadratic cost approximation $\hat{c}(\mathbf{x}_t, \mathbf{u}_t)$ and approximation error residuals $\tilde{c}(\mathbf{x}_t, \mathbf{u}_t)$ (lines 3, 4). 
In order to improve the convergence behavior of our algorithm, we adjust the KL-step $\epsilon_t$ of the LQR-FLM optimization in Eq.~(\ref{eq:lqr_optim}) based inversely on the proportion of the residual costs-to-go to the sampled costs-to-go (line 5). 
In particular, if the ratio between the residual and the overall cost is sufficiently small or large, we increase or decrease, respectively, the KL-step $\epsilon_t$. 
We then continue with optimizing for the temperature $\eta_t$ using the dual function from Eq.~(\ref{eq:reps_dual}) (line 6). Finally, we perform an LQR-FLM update on the cost approximation (line 7) and a subsequent PI$^2$ update using the cost residuals (line 8).
As PILQR combines LQR-FLM and PI$^2$ updates in sequence in each iteration, its computational complexity can be determined as the sum of both methods. 
Due to the properties of PI$^2$, the covariance of the optimized TVLG controllers decreases each iteration and the method eventually converges to a single solution.

% \vspace{-0.4cm}
\section{Training Parametric Policies with GPS}

\begin{figure}
    \centering
    \setlength{\unitlength}{0.5\columnwidth}
    \begin{picture}(1.0, 0.5)
        \put(-0.5, 0.0){\includegraphics[width=0.33\columnwidth]{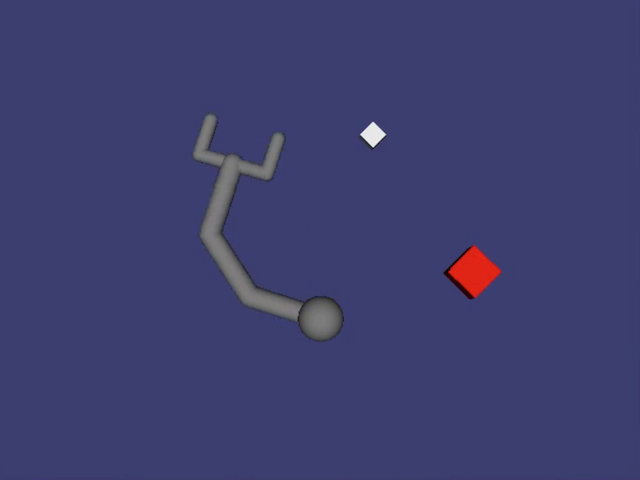}}
        \put(0.17, 0.0){\includegraphics[width=0.33\columnwidth]{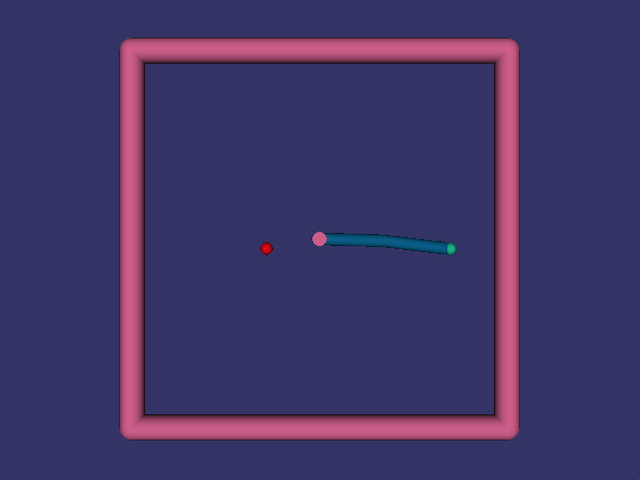}}
        \put(0.84, 0.0){\includegraphics[width=0.33\columnwidth]{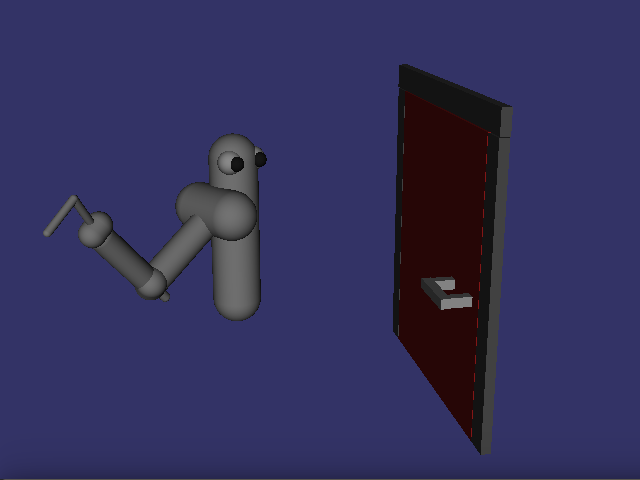}}
    \end{picture}
    \vspace{-5pt}
    \caption{We evaluate on a set of simulated robotic manipulation tasks with varying difficulty. Left to right, the tasks involve pushing a block, reaching for a target, and opening a door in 3D.}
    \label{fig:simtasks}
\end{figure}

PILQR offers an approach to perform trajectory optimization of TVLG policies. In this work, we employ mirror descent guided policy search (MDGPS) \cite{MontgomeryL16} in order to use PILQR to train parametric policies, such as neural networks. Instead of directly learning the parameters of a high-dimensional parametric or ``global policy'' with RL, we first learn simple TVLG policies, which we refer to as ``local policies'' $p(\mathbf{u}_t | \mathbf{x}_t)$ for various initial conditions of the task.
After optimizing the local policies, the optimized controls from these policies are used to create a training set for learning the global policy $\pi_\theta$ in a supervised manner. Hence, the final global policy generalizes across multiple local policies.

Using the TVLG representation of the local policies makes it straightforward to incorporate PILQR into the MDGPS framework. Instead of constraining against the old local TVLG policy as in Theorem~(\ref{theo:pi2}), each instance of the local policy is now constrained against the old global policy

\begin{algorithm}[t]
\begin{algorithmic}[1]
\FOR{iteration $k \in \{1,\dots,K\}$}
    \STATE{ Generate trajectories $\mathcal{D} = \{\tau_{i}\}$ by running the current linear-Gaussian policy $p^{(k-1)}\left (  \mathbf{u}_t  | \mathbf{x}_t \right)$}
    \STATE{Fit TVLG dynamics $\hat{p}\left(\mathbf{x}_{t+1} | \mathbf{x}_{t}, \mathbf{u}_{t}\right)$}
    \STATE{Estimate cost approximation $\hat{c}(\mathbf{x}_t, \mathbf{u}_t)$ using fitted dynamics and compute cost residuals: \\$\tilde{c}(\mathbf{x}_t, \mathbf{u}_t)  = c(\mathbf{x}_t, \mathbf{u}_t)  - \hat{c}(\mathbf{x}_t, \mathbf{u}_t) $}
    \STATE{Adjust LQR-FLM KL step $\epsilon_t$ based on ratio of residual costs-to-go $\tilde{S}$ and sampled costs-to-go $S$}
    \STATE{Compute $\eta_t$ using dual function from Eq.~(\ref{eq:reps_dual})}
    \STATE{Perform LQR-FLM update to compute $\hat{p}\left (  \mathbf{u}_t  | \mathbf{x}_t \right)$:
    $\min_{p^{(i)}}~\mathbb{E}_{p^{(i)}}[Q(\mathbf{x}_t,\mathbf{u}_t)]$ \\
    ~~~~~~$s.t.~~\mathbb{E}_{p^{(i)}}\left[D_{\text{KL}}(p^{(i)}\|p^{(i-1)})\right]\leq\epsilon_t$}
    \STATE{Perform PI$^2$ update using cost residuals and LQR-FLM actions to compute the new policy: \\
     $p^{(k)}\left (  \mathbf{u}_t  | \mathbf{x}_t \right) \propto \hat{p}\left(\mathbf{u}_t| \mathbf{x}_t \right) \mathbb{E}_{p^{(i-1)}}\left[\exp \left(-\frac{1}{\eta_t}  \tilde{S}_t) \right)\right]$}
\ENDFOR
\end{algorithmic}
\caption{PILQR algorithm}
\label{algo:pilqr}
\end{algorithm}

\vspace{-0.5cm}
{\small
\begin{align*}
\min_{p^{(i)}}~~\mathbb{E}_{p^{(i)}}[S(\mathbf{x}_{t},\mathbf{u}_{t})]
s.t.~~\mathbb{E}_{p^{(i-1)}}\left[D_\text{KL} \left(p^{(i)}\|\, \pi^{(i-1)}_\theta\right)\right]\leq \epsilon\,.
\vspace{-0.3cm}
\end{align*}
}The two-stage update proceeds as described in Section~\ref{2stage_pi2}, with the change that the LQR-FLM policy is now constrained against the old global policy $\pi^{(i-1)}_\theta$.

% \vspace{-0.3cm}
\section{Experimental Evaluation}
\label{sec:experiments}

Our experiments aim to answer the following questions: (1)~How does our method compare to other trajectory-centric and deep RL algorithms in terms of final performance and sample efficiency? (2)~Can we utilize linear-Gaussian policies trained using PILQR to obtain robust neural network policies using MDGPS? (3)~Is our proposed algorithm capable of learning complex manipulation skills on a real robotic platform? We study these questions through a set of simulated comparisons against prior methods, as well as real-world tasks using a PR2 robot. The performance of each method can be seen in our supplementary video.\footnote{\mbox{\url{https://sites.google.com/site/icml17pilqr}}} Our focus in this work is specifically on robotics tasks that involve manipulation of objects, since such tasks often exhibit elements of continuous and discontinuous dynamics and require sample-efficient methods, making them challenging for both model-based and model-free methods.

% \vspace{-0.3cm}
\subsection{Simulation Experiments}
\label{sec:simresults}

\begin{figure}
    \centering
    \includegraphics[width=0.99\columnwidth]{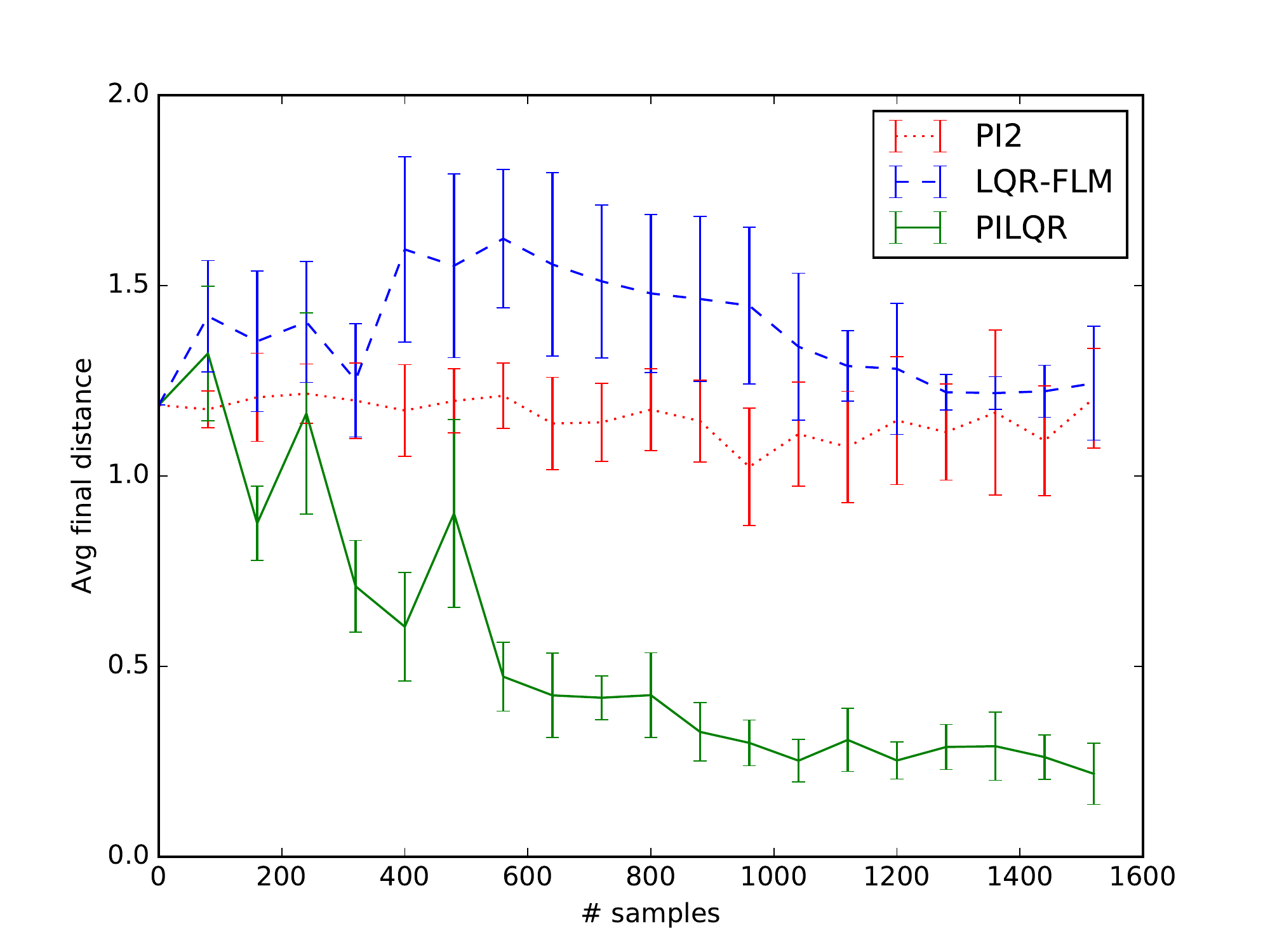}
    \vspace{-10pt}
    \caption{Average final distance from the block to the goal on one condition of the gripper pusher task. This condition is difficult due to the block being initialized far away from the gripper and the goal area, and only PILQR is able to succeed in reaching the block and pushing it toward the goal. Results for additional conditions are available in Appendix~\ref{app:sim_res}, and the supplementary video demonstrates the final behavior of each learned policy.}
    \label{fig:gripperpusher}
\end{figure}

We evaluate our method on three simulated robotic manipulation tasks, depicted in Figure~\ref{fig:simtasks} and discussed below:

\noindent {\bf Gripper pusher.} This task involves controlling a 4 DoF arm with a gripper to push a white block to a red goal area.
The cost function is a weighted combination of the distance from the gripper to the block and from the block to the goal.

\noindent {\bf Reacher.} The reacher task from OpenAI gym~\cite{gym} requires moving the end of a 2 DoF arm to a target position. This task is included to provide comparisons against prior methods. The cost function is the distance from the end effector to the target. We modify the cost function slightly: the original task uses an $\ell_2$ norm, while we use a differentiable Huber-style loss, which is more typical for LQR-based methods~\cite{synthesis}.

\noindent {\bf Door opening.} This task requires opening a door with a 6 DoF 3D arm. The arm must grasp the handle and pull the door to a target angle, which can be particularly challenging for model-based methods due to the complex contacts between the hand and the handle, and the fact that a contact must be established before the door can be opened. The cost function is a weighted combination of the distance of the end effector to the door handle and the angle of the door.

\noindent Additional experimental setup details, including the exact cost functions, are provided in Appendix~\ref{app:sim_setup}.

We first compare PILQR to LQR-FLM and PI$^2$ on the gripper pusher and door opening tasks.
Figure~\ref{fig:gripperpusher} details performance of each method on the most difficult condition for the gripper pusher task. Both LQR-FLM and PI$^2$ perform significantly worse on the two more difficult conditions of this task. While PI$^2$ improves in performance as we provide more samples, LQR-FLM is bounded by its ability to model the dynamics, and thus predict the costs, at the moment when the gripper makes contact with the block. Our method solves all four conditions with 400 total episodes per condition and, as shown in the supplementary video, is able to learn a diverse set of successful behaviors including flicking, guiding, and hitting the block. On the door opening task, PILQR trains TVLG policies that succeed at opening the door from each of the four initial robot positions. While the policies trained with LQR-FLM are able to reach the handle, they fail to open the door.

\begin{figure}
    \centering
    \includegraphics[width=0.99\columnwidth]{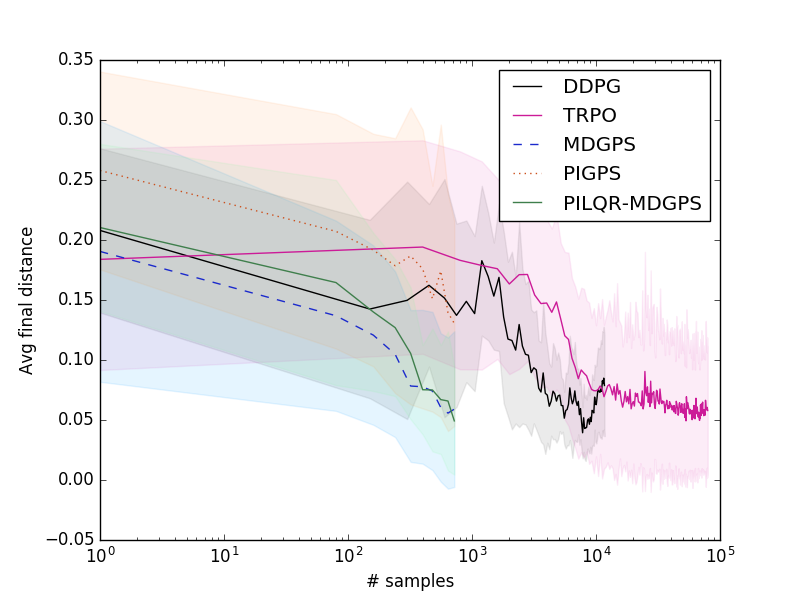}
    \vspace{-10pt}
    \caption{Final distance from the reacher end effector to the target averaged across 300 random test conditions per iteration. MDGPS with LQR-FLM, MDGPS with PILQR, TRPO, and DDPG all perform competitively. However, as the log scale for the x axis shows, TRPO and DDPG require orders of magnitude more samples. MDGPS with PI$^2$ performs noticeably worse.}
    \label{fig:reacher}
\end{figure}

Next we evaluate neural network policies on the reacher task. Figure~\ref{fig:reacher} shows results for MDGPS with each local policy method, as well as two prior deep RL methods that directly learn neural network policies: trust region policy optimization (TRPO)~\cite{slmja-trpo-15} and deep deterministic policy gradient (DDPG)~\cite{lhphe-ccdrl-16}. MDGPS with LQR-FLM and MDGPS with PILQR perform competitively in terms of the final distance from the end effector to the target, which is unsurprising given the simplicity of the task, whereas MDGPS with PI$^2$ is again not able to make much progress. On the reacher task, DDPG and TRPO use 25 and 150 times more samples, respectively, to achieve approximately the same performance as MDGPS with LQR-FLM and PILQR. For comparison, amongst previous deep RL algorithms that combined model-based and model-free methods, SVG and NAF with imagination rollouts reported using approximately up to five times fewer samples than DDPG on a similar reacher task~\cite{stochastic_value_gradients,GuLSL16}. Thus we can expect that MDGPS with our method is about one order of magnitude more sample-efficient than SVG and NAF. While this is a rough approximation, it demonstrates a significant improvement in efficiency.

\begin{figure}[t]
    \centering
    \includegraphics[width=0.99\columnwidth]{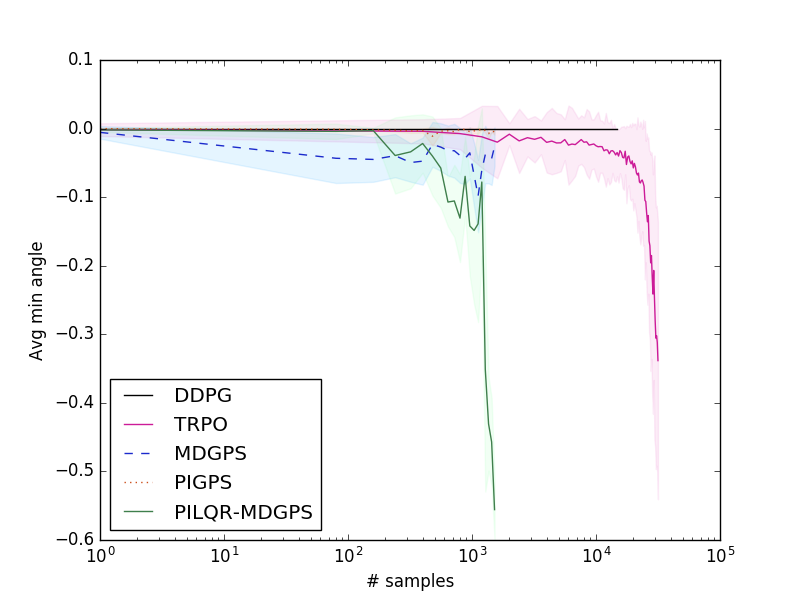}
    \vspace{-8pt}
    \caption{Minimum angle in radians of the door hinge (lower is better) averaged across 100 random test conditions per iteration. MDGPS with PILQR outperforms all other methods we compare against, with orders of magnitude fewer samples than DDPG and TRPO, which is the only other successful algorithm.}
    \label{fig:door_opening}
    \vspace{1pt}
\end{figure}

Finally, we compare the same methods for training neural network policies
on the door opening task, shown in Figure~\ref{fig:door_opening}. TRPO requires 20 times more samples than MDGPS with PILQR to learn a successful neural network policy. The other three methods were unable to learn a policy that opens the door despite extensive hyperparameter tuning. We provide additional simulation results in Appendix~\ref{app:sim_res}.

\subsection{Real Robot Experiments}

To evaluate our method on a real robotic platform, we use a PR2 robot (see Figure~\ref{fig:cover})
to learn the following tasks:

\noindent {\bf Hockey.} The hockey task requires using a stick to hit a puck into a goal \SI{1.4}{\meter} away. The cost function consists of two parts: the distance between the current position of the stick and a target pose that is close to the puck, and the distance between the position of the puck and the goal. The puck is tracked using a motion capture system. Although the cost provides some shaping, this task presents a significant challenge due to the difference in outcomes based on whether or not the robot actually strikes the puck, making it challenging for prior methods, as we show below.

\noindent {\bf Power plug plugging.} In this task, the robot must plug a power plug into an outlet. The cost function is the distance between the plug and a target location inside the outlet. This task requires fine manipulation to fully insert the plug.
Our TVLG policies consist of 100 time steps and we control our robot at a frequency of 20 Hz.
For further details of the experimental setup, including the cost functions, we refer the reader to Appendix~\ref{app:real_setup}. 

\begin{figure}[t]
\centering
\includegraphics[width=0.99\columnwidth]{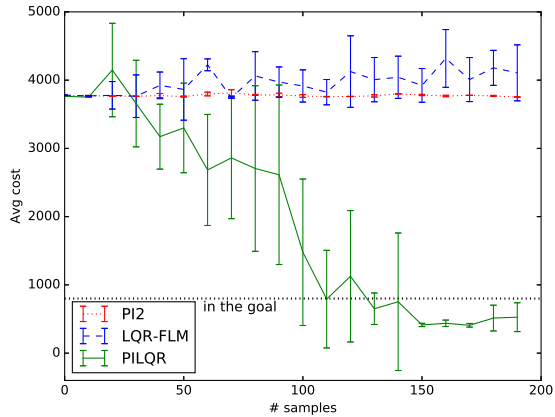}
\vspace{-8pt}
\caption{Single condition comparison of the hockey task performed on the real robot. Costs lower than the dotted line correspond to the puck entering the goal.}
\label{fig:hockey-single}
\vspace{1pt}
\end{figure}

Both of these tasks have difficult, discontinuous dynamics at the contacts between the objects, and both require a high degree of precision to succeed.
In contrast to prior works~\cite{daniel2013learning} that use kinesthetic teaching to initialize a policy that is then finetuned with model-free methods, our method does not require any human demonstrations. 
The policies are randomly initialized using a Gaussian distribution with zero mean. Such initialization does not provide any information about the task to be performed.
In all of the real robot experiments, policies are updated every 10 rollouts and the final policy is obtained after 20-25 iterations,
which corresponds to mastering the skill with less than one hour of experience.

In the first set of experiments, we aim to learn a policy that is able to hit the puck into the goal for a single position of the goal and the puck. 
The results of this experiment are shown in Figure~\ref{fig:hockey-single}.
In the case of the prior PI$^2$ method~\cite{TheodorouBS10}, the robot was not able to hit the puck. Since the puck position has the largest influence on the cost,
the resulting learning curve shows little change in the cost over the course of training.
The policy to move the arm towards the recorded arm position that enables hitting the puck turned out to be too challenging for PI$^2$ in the limited number of trials used for this experiment.
In the case of LQR-FLM, the robot was able to occasionally hit the puck in different directions. 
However, the resulting policy could not capture the complex dynamics of the sliding puck or the discrete transition, and was unable to hit the puck toward the goal.
The PILQR method was able to learn a robust policy that consistently hits the puck into the goal. 
Using the step adjustment rule described in Section~\ref{sec:step-adj}, the algorithm would shift towards model-free updates from the PI$^2$ method as the TVLG approximation of the dynamics became less accurate.
Using our method, the robot was able to get to the final position of the arm using fast model-based updates from LQR-FLM
and learn the puck-hitting policy, which is difficult to model, by automatically shifting towards model-free PI$^2$ updates.

\begin{figure}
\centering
\includegraphics[trim=4pt 10pt 0pt 2pt, clip=true, width=0.99\columnwidth]{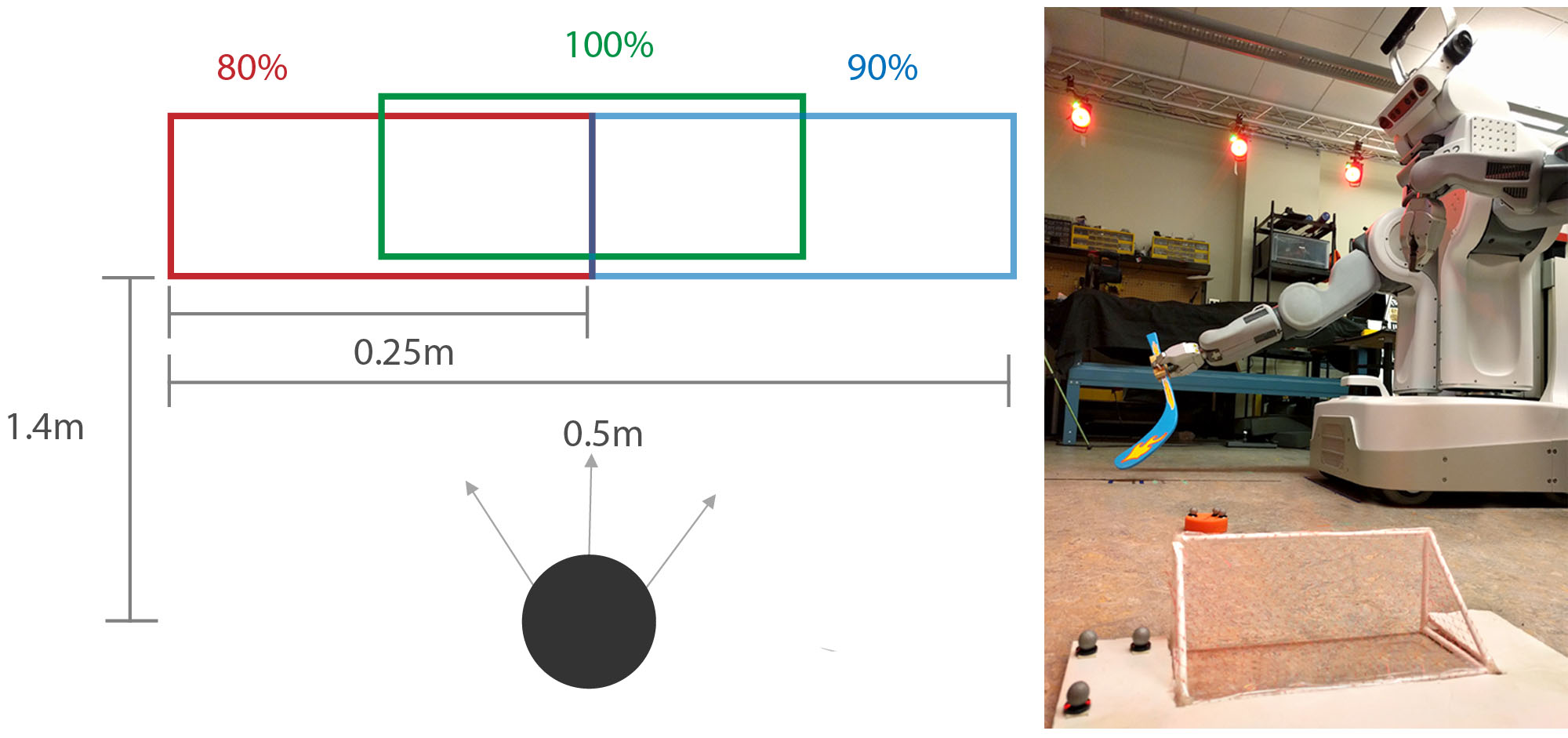}
\vspace{-10pt}
\caption{Experimental setup of the hockey task and the success rate of the final PILQR-MDGPS policy. \textit{Red} and \textit{Blue}: goal positions used for training, \textit{Green}: new goal position.}
\label{fig:hockey-diagram}
\end{figure}

In our second set of hockey experiments, we evaluate whether we can learn a neural network policy using the MDGPS-PILQR algorithm that can hit the puck into different goal locations. The goals were spaced \SI{0.5}{\meter} apart (see Figure~\ref{fig:hockey-diagram}).
The strategies for hitting the puck into different goal positions differ substantially, since the robot must adjust the arm pose to approach the puck from the right direction and aim toward the target. This makes it quite challenging to learn a single policy for this task.
We performed 30 rollouts for three different positions of the goal (10 rollouts each), two of which were used during training.
The neural network policy was able to hit the puck into the goal in 90\% of the cases (see Figure~\ref{fig:hockey-diagram}).
This shows that our method can learn high-dimensional neural network policies that generalize across various conditions. 

The results of the plug experiment are shown in Figure~\ref{fig:socket-single}.
PI$^2$ alone was unable to reach the socket. The LQR-FLM algorithm succeeded only 60\% of the time at convergence.
In contrast to the peg insertion-style tasks evaluated in prior work that used LQR-FLM~\cite{lwa-lnnpg-15}, this task requires very fine manipulation due to the small size of the plug.
Our method was able to converge to a policy that plugged in the power plug on every rollout at convergence.
The supplementary video illustrates the final behaviors of each method for both the hockey and power plug tasks.\footnote{\mbox{\url{https://sites.google.com/site/icml17pilqr}}}

%\vspace{-0.1cm}
\section{Discussion and Future Work}

\begin{figure}
\centering
\includegraphics[trim=0pt 0pt 0pt 38pt, clip=true, width=0.99\columnwidth]{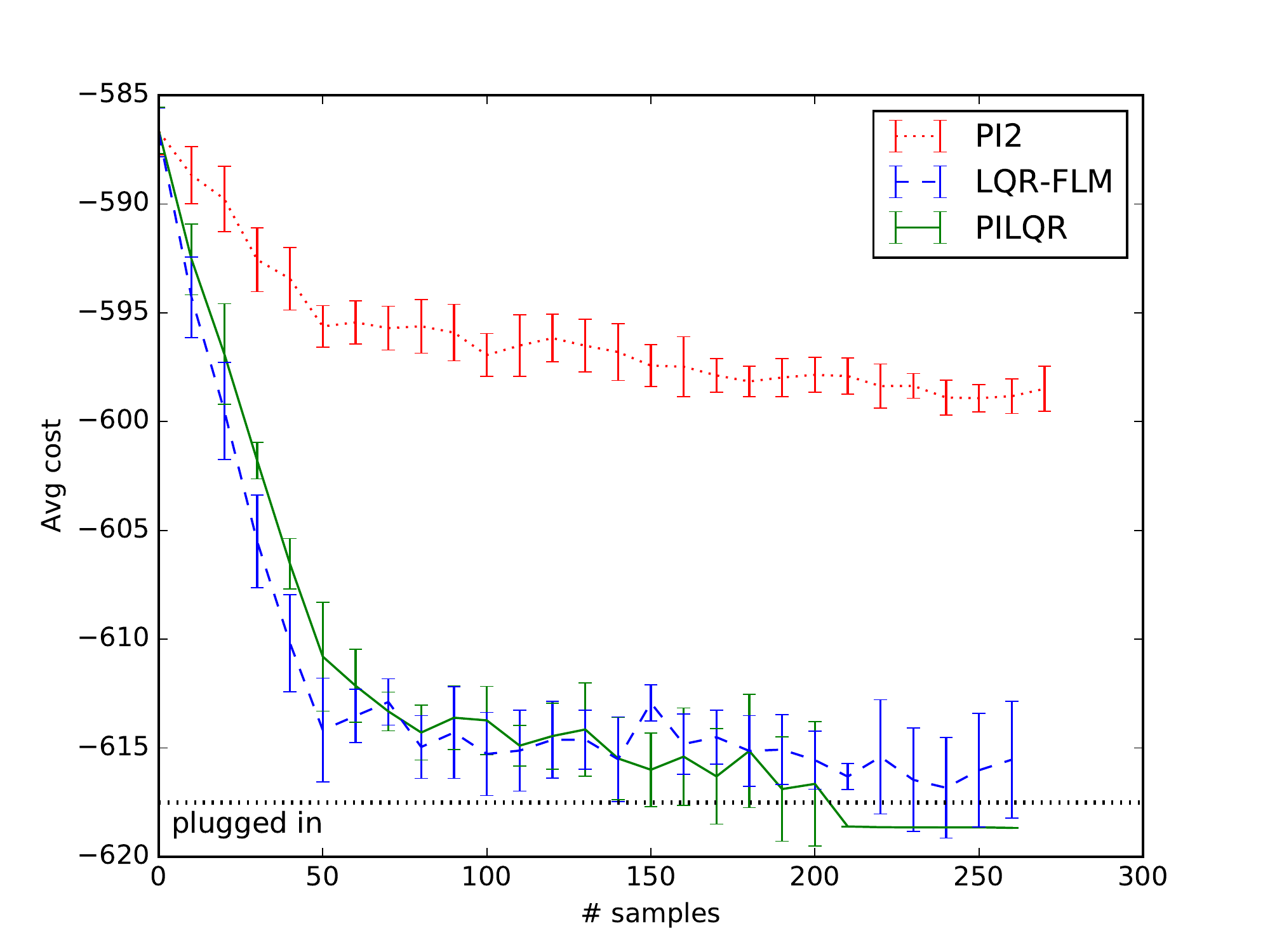}
\vspace{-13pt}
\caption{Single condition comparison of the power plug task performed on the real robot. Note that costs above the dotted line correspond to executions that did not actually insert the plug into the socket. Only our method (PILQR) was able to consistently insert the plug all the way into the socket by the final iteration.}
\label{fig:socket-single}
\end{figure}

We presented an algorithm that combines elements of model-free and model-based RL, with the aim of combining the sample efficiency of model-based methods with the ability of model-free methods to improve the policy even in situations where the model's structural assumptions are violated. We show that a particular choice of policy representation -- TVLG controllers -- is amenable to fast optimization with model-based LQR-FLM and model-free PI$^2$ algorithms using sample-based updates. We propose a hybrid algorithm based on these two components, where the PI$^2$ update is performed on the residuals between the true sample-based cost and the cost estimated under the local linear models. This algorithm has a number of appealing properties: it naturally trades off between model-based and model-free updates based on the amount of model error, can easily be extended with a KL-divergence constraint for stable learning, and can be effectively used for real-world robotic learning. We further demonstrate that, although this algorithm is specific to TVLG policies, it can be integrated into the GPS framework in order to train arbitrary parameterized policies, including deep neural networks.

We evaluated our approach on a range of challenging simulated and real-world tasks. The results show that our method combines the efficiency of model-based learning with the ability of model-free methods to succeed on tasks with discontinuous dynamics and costs. We further illustrate in direct comparisons against state-of-the-art model-free deep RL methods that, when combined with the GPS framework, our method achieves substantially better sample efficiency. It is worth noting, however, that the application of trajectory-centric RL methods such as ours, even when combined with GPS, requires the ability to reset the environment into consistent initial states~\cite{LevineA14,Levine:2016}. Recent work proposes a clustering method for lifting this restriction by sampling trajectories from random initial states and assembling them into task instances after the fact~\cite{montgomery_ajay_icra_paper}. Integrating this technique into our method would further improve its generality. An additional limitation of our method is that the form of both the model-based and model-free update requires a continuous action space. Extensions to discrete or hybrid action spaces would require some kind of continuous relaxation, and this is left for future work.

\section*{Acknowledgements} 

The authors would like to thank Sean Mason for his help with preparing the real robot experiments.
This work was supported in part by National Science Foundation grants IIS-1614653, IIS-1205249, IIS-1017134, EECS-0926052, the Office of Naval Research, the Okawa Foundation, and the Max-Planck-Society. Marvin Zhang was supported by a BAIR fellowship. Any opinions, findings, and conclusions or recommendations expressed in this material are those of the authors and do not necessarily reflect the views of the funding organizations.
%}

\setlength{\bibsep}{2.85pt}
\bibliography{references}
\bibliographystyle{icml2017}

\clearpage

\section{Appendix}

\subsection{Derivation of LQR-FLM}
\label{app:lqr_flm}

Given a TVLG dynamics model and quadratic cost approximation, we can approximate our Q and value functions to second order with the following dynamic programming updates, which proceed from the last time step $t = T$ to the first step $t = 1$:
\begin{align*}
    Q_{\mathbf{x},t}=c_{\mathbf{x},t}&+\mathbf{f}_{\mathbf{x},t}^\top{V}_{\mathbf{x},t+1}\,,~~Q_{\mathbf{xx},t}=c_{\mathbf{xx},t}+\mathbf{f}_{\mathbf{x},t}^\top{V}_{\mathbf{xx},t+1}\mathbf{f}_{\mathbf{x},t}\,,\\
    Q_{\mathbf{u},t}=c_{\mathbf{u},t}&+\mathbf{f}_{\mathbf{u},t}^\top{V}_{\mathbf{x},t+1}\,,~~Q_{\mathbf{uu},t}=c_{\mathbf{uu},t}+\mathbf{f}_{\mathbf{u},t}^\top{V}_{\mathbf{xx},t+1}\mathbf{f}_{\mathbf{u},t}\,,\\
    &Q_{\mathbf{xu},t}=c_{\mathbf{xu},t}+\mathbf{f}_{\mathbf{x},t}^\top{V}_{\mathbf{xx},t+1}\mathbf{f}_{\mathbf{u},t}\,,\\
    &V_{\mathbf{x},t}=Q_{\mathbf{x},t}-Q_{\mathbf{xu},t}Q_{\mathbf{uu},t}^{-1}Q_{\mathbf{u},t}\,,\\
    &V_{\mathbf{xx},t}=Q_{\mathbf{xx},t}-Q_{\mathbf{xu},t}Q_{\mathbf{uu},t}^{-1}Q_{\mathbf{ux},t}\,.
\end{align*}
Here, similar to prior work, we use subscripts to denote derivatives. It can be shown (e.g., in~\cite{synthesis}) that the action $\mathbf{u}_t$ that minimizes the second-order approximation of the Q-function at every time step $t$ is given by
\[
\mathbf{u}_t=-Q_{\mathbf{uu},t}^{-1}Q_{\mathbf{ux},t}\mathbf{x}_t-Q_{\mathbf{uu},t}^{-1}Q_{\mathbf{u},t}\,.
\]
This action is a linear function of the state $\mathbf{x}_t$, thus we can construct an optimal linear policy by setting $\mathbf{K}_t=-Q_{\mathbf{uu},t}^{-1}Q_{\mathbf{ux},t}$ and $\mathbf{k}_t=-Q_{\mathbf{uu},t}^{-1}Q_{\mathbf{u},t}$. We can also show that the maximum-entropy policy that minimizes the approximate Q-function is given by
\[
p(\mathbf{u}_t|\mathbf{x}_t)=\mathcal{N}(\mathbf{K}_t\mathbf{x}_t+\mathbf{k}_t,Q_{\mathbf{uu},t}).
\]
This form is useful for LQR-FLM, as we use intermediate policies to generate samples to fit TVLG dynamics. \citet{LevineA14} impose a constraint on the total KL-divergence between the old and new trajectory distributions induced by the policies through an augmented cost function $\bar{c}(\mathbf{x}_t,\mathbf{u}_t)=\frac{1}{\eta}c(\mathbf{x}_t,\mathbf{u}_t)-\log{p}^{(i-1)}(\mathbf{u}_t|\mathbf{x}_t)$, where solving for $\eta$ via dual gradient descent can yield an exact solution to a KL-constrained LQR problem, where there is a single constraint that operates at the level of trajectory distributions $p(\tau)$. We can instead impose a separate KL-divergence constraint at each time step with the constrained optimization
\begin{align*}
    \min_{\mathbf{u}_t,\mathbf{\Sigma}_t}~~&\mathbb{E}_{\mathbf{x}\sim{p}(\mathbf{x}_t),\mathbf{u}\sim\mathcal{N}(\mathbf{u}_t,\mathbf{\Sigma}_t)}[Q(\mathbf{x},\mathbf{u})]\\
                                s.t.~~&\mathbb{E}_{\mathbf{x}\sim{p}(\mathbf{x}_t)}[D_{KL}(\mathcal{N}(\mathbf{u}_t,\mathbf{\Sigma}_t)\|p^{(i-1)})]\leq\epsilon_t\,.
\end{align*}
The new policy will be centered around $\mathbf{u}_t$ with covariance term $\mathbf{\Sigma}_t$. Let the old policy be parameterized by $\bar{\mathbf{K}}_t$, $\bar{\mathbf{k}}_t$, and $\bar{\mathbf{C}}_t$. We form the Lagrangian (dividing by $\eta_t$), approximate $Q$, and expand the KL-divergence term to get
\begin{align*}
    &\mathcal{L}(\mathbf{u}_t,\mathbf{\Sigma}_t,\eta_t)\\
    &=\frac{1}{\eta_t}\left[Q_{\mathbf{x},t}^\top\mathbf{x}_t+Q_{\mathbf{u},t}^\top\mathbf{u}_t+\frac{1}{2}\mathbf{x}_t^\top{Q}_{\mathbf{xx},t}\mathbf{x}_t+\frac{1}{2}tr(Q_{\mathbf{xx},t}\Sigma_{\mathbf{x},t})\right.\\
    &~~~~~~~~~~~~\left.+\frac{1}{2}\mathbf{u}_t^\top{Q}_{\mathbf{uu},t}\mathbf{u}_t+\frac{1}{2}tr(Q_{\mathbf{uu},t}\mathbf{\Sigma}_t)+\mathbf{x}_t^\top{Q}_{\mathbf{xu},t}\mathbf{u}_t\right]\\
                                                 &~~~+\frac{1}{2}\left[\log|\bar{\mathbf{\Sigma}}_t|-\log|\mathbf{\Sigma}_t|-d+tr(\bar{\mathbf{\Sigma}}_t^{-1}\mathbf{\Sigma}_t)\right.\\
                                                 &~~~~~~~~~~~~\left.+(\bar{\mathbf{K}}_t\mathbf{x}_t+\bar{\mathbf{k}}_t-\mathbf{u}_t)^\top\bar{\mathbf{\Sigma}}_t^{-1}(\bar{\mathbf{K}}_t\mathbf{x}_t+\bar{\mathbf{k}}_t-\mathbf{u}_t)\right.\\
                                                 &~~~~~~~~~~~~\left.+tr(\bar{\mathbf{K}}_t^\top\bar{\mathbf{\Sigma}}_t^{-1}\bar{\mathbf{K}}_t\Sigma_{\mathbf{x},t})\right]-\epsilon_t\,.
\end{align*}
Now we set the derivative of $\mathcal{L}$ with respect to $\mathbf{\Sigma}_t$ equal to $0$ and get
\begin{align*}
\mathbf{\Sigma}_t=\left(\frac{1}{\eta_t}Q_{\mathbf{uu},t}+\bar{\mathbf{\Sigma}}_t^{-1}\right)^{-1}\,.
\end{align*}

Setting the derivative with respect to $\mathbf{u}_t$ equal to $0$, we get
\begin{align*}
\mathbf{u}_t=-\mathbf{\Sigma}_t\left(\frac{1}{\eta_t}Q_{\mathbf{u},t}+\frac{1}{\eta_t}Q_{\mathbf{ux},t}\mathbf{x}_t-\hat{\mathbf{C}}_t^{-1}(\hat{\mathbf{K}}_t\mathbf{x}_t+\hat{\mathbf{k}}_t)\right)\,,
\end{align*}
Thus our updated mean has the parameters
\begin{align*}
      \mathbf{k}_t&=-\mathbf{\Sigma}_t\left(\frac{1}{\eta_t}Q_{\mathbf{u},t}-\hat{\mathbf{C}}_t^{-1}\hat{\mathbf{k}}_t\right)\,,\\
      \mathbf{K}_t&=-\mathbf{\Sigma}_t\left(\frac{1}{\eta_t}Q_{\mathbf{ux},t}-\hat{\mathbf{C}}_t^{-1}\hat{\mathbf{K}}_t\right)\,.
\end{align*}
As discussed by~\citet{synthesis}, when the updated $\mathbf{K}_t$ and $\mathbf{k}_t$ are not actually the optimal solution for the current quadratic Q-function, the update to the value function is a bit more complex, and is given by
\begin{align*}
     V_{\mathbf{x},t}&=Q_{\mathbf{x},t}^\top+Q_{\mathbf{u},t}^\top\mathbf{K}_t+\mathbf{k}_t^\top{Q}_{\mathbf{uu},t}\mathbf{K}_t+\mathbf{k}_t^\top{Q}_{\mathbf{ux},t}\,,\\
    V_{\mathbf{xx},t}&=Q_{\mathbf{xx},t}+\mathbf{K}_t^\top{Q}_{\mathbf{uu},t}\mathbf{K}_t+2Q_{\mathbf{xu},t}\mathbf{K}_t\,.
\end{align*}

\subsection{PI$^2$ update through constrained optimization}
\label{app:pi2_proof}
The structure of the proof for the PI$^2$ update follows~\cite{PetersMA10}, applied to the cost-to-go $S(\mathbf{x}_t, \mathbf{u}_t)$. Let us first consider the cost-to-go $S(\mathbf{x}_t, \mathbf{u}_t)$ of a single trajectory or path $ (\mathbf{x}_t, \mathbf{u}_t,\mathbf{x}_{t+1}, \mathbf{u}_{t+1}, \dots,  \mathbf{x}_T, \mathbf{u}_T)$ where $T$ is the maximum number of time steps. We can rewrite the Lagrangian in a sample-based form as
\begin{align*}
    &\mathcal{L}(p^{(i)}, \eta_t) = \\
    & \sum 
    \left(p^{(i)}S(\mathbf{x}_{t},\mathbf{u}_{t})\right) + \eta_t \left( \sum p^{(i)}\log \frac{p^{(i)}}{p^{(i-1)}}- \epsilon \right)\,.
\end{align*}
Taking the derivative of $\mathcal{L}(p^{(i)}, \eta_t)$ with respect to a single optimal policy $p^{(i)}$ and setting it to zero results in 
\begin{align*}
    \frac{\partial \mathcal{L}}{\partial p^{(i)}} &= S(\mathbf{x}_{t},\mathbf{u}_{t}) + \eta_t \left(\log \frac{p^{(i)}}{p^{(i-1)}}  + p^{(i)} \frac{p^{(i-1)}}{p^{(i)}} \frac{1}{p^{(i-1)}}\right) \\
    &= S(\mathbf{x}_{t},\mathbf{u}_{t}) + \eta_t \log \frac{p^{(i)}}{p^{(i-1)}} = 0\,.
\end{align*}
Solve the derivative for $p^{(i)}$ by exponentiating both sides
\begin{align*}
    \log \frac{p^{(i)}}{p^{(i-1)}} = -\frac{1}{\eta_t} S(\mathbf{x}_{t},\mathbf{u}_{t})\,,\\
    p^{(i)} = p^{(i-1)} \exp\left(-\frac{1}{\eta_t}S(\mathbf{x}_{t},\mathbf{u}_{t})\right)\,.
\end{align*}
This gives us a probability update rule for a single sample that only considers cost-to-go of one path. However, when sampling from a stochastic policy $p^{(i-1)}\left (  \mathbf{u}_t  | \mathbf{x}_t \right)$, there are multiple paths that start in state $\mathbf{x}_t$ with action $\mathbf{u}_t$ and continue with a noisy policy afterwards. Hence, the updated policy $p^{(i)}\left (  \mathbf{u}_t  | \mathbf{x}_t \right)$  will incorporate all of these paths as
\begin{align*}
\!\!p^{(i)}\!\left(  \mathbf{u}_t  | \mathbf{x}_t \right) 
\!\propto\! p^{(i-1)}\!\left(\mathbf{u}_t| \mathbf{x}_t \right) \!\mathbb{E}_{p^{(i-1)}}\!\!\left[\exp \left(\!-\frac{1}{\eta_t} S(\mathbf{x}_{t},\mathbf{u}_{t}) \right)\!\right]\,.
\end{align*}
The updated policy is additionally subject to normalization, which corresponds to computing the normalized probabilities in Eq.~(\ref{eqn:pi2update}).

\subsection{Detailed Experimental Setup}

\subsubsection{Simulation Experiments}
\label{app:sim_setup}

All of our cost functions use the following generic loss term on a vector $\mathbf{z}$
\begin{align}
\ell(\mathbf{z})=\frac{1}{2}\alpha\|\mathbf{z}\|_2^2+\beta\sqrt{\gamma+\|\mathbf{z}\|_2^2}\,.
\label{eq:cost}
\end{align}
$\alpha$ and $\beta$ are hyperparameters that weight the squared $\ell_2$ loss and Huber-style loss, respectively, and we set $\gamma=10^{-5}$.

On the gripper pusher task, we have three such terms. The first sets $\mathbf{z}$ as the vector difference between the block and goal positions with $\alpha=10$ and $\beta=0.1$. $\mathbf{z}$ for the second measures the vector difference between the gripper and block positions, again with $\alpha=10$ and $\beta=0.1$, and the last loss term penalizes the magnitude of the fourth robot joint angle with $\alpha=10$ and $\beta=0$. We include this last term because, while the gripper moves in 3D, the block is constrained to a 2D plane and we thus want to encourage the gripper to also stay in this plane. These loss terms are weighted by $4$, $1$, and $1$ respectively.

On the reacher task, our only loss term uses as $\mathbf{z}$ the vector difference between the arm end effector and the target, with $\alpha=0$ and $\beta=1$. For both the reacher and door opening tasks, we also include a small torque penalty term that penalizes unnecessary actuation and is typically several orders of magnitude smaller than the other loss terms.

On the door opening task, we use two loss terms. For the first, $\mathbf{z}$ measures the difference between the angle in radians of the door hinge and the desired angle of $-1.0$, with $\alpha=1$ and $\beta=0$. The second term is time-varying: for the first 25 time steps, $\mathbf{z}$ is the vector difference between the bottom of the robot end effector and a position above the door handle, and for the remaining time steps, $\mathbf{z}$ is the vector difference from the end effector to inside the handle. This encourages the policy to first navigate to a position above the handle, and then reach down with the hook to grasp the handle. Because we want to emphasize the second loss during the beginning of the trajectory and gradually switch to the first loss, we do a time-varying weighting between the loss terms. The weight of the second loss term is fixed to $1$, but the weight of the first loss term at time step $t$ is $5\left(\frac{t}{T}\right)^2$.

For the neural network policy architectures, we use two fully-connected hidden layers of rectified linear units (ReLUs) with no output non-linearity. On the reacher task, the hidden layer size is 32 units per layer, and on the door opening task, the hidden layer size is 100 units per layer.

\begin{figure}
    \centering
    \setlength{\unitlength}{0.5\columnwidth}
    \begin{picture}(1.0, 1.5)
        \put(-0.5, 0.75){\includegraphics[width=0.49\columnwidth]{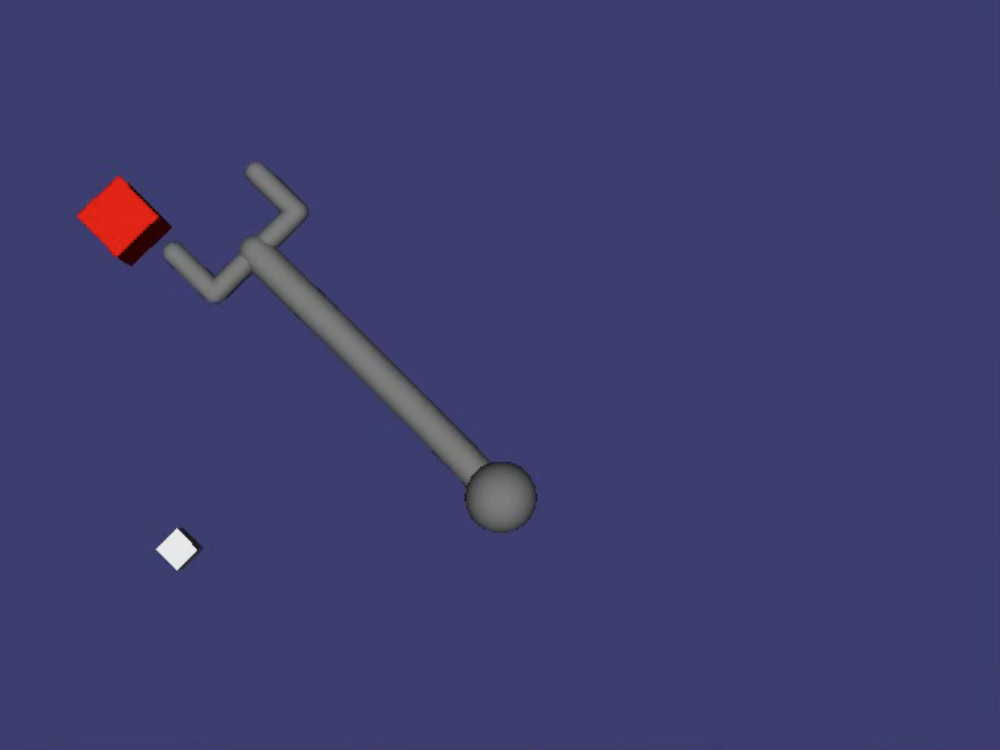}}
        \put(0.5, 0.75){\includegraphics[width=0.49\columnwidth]{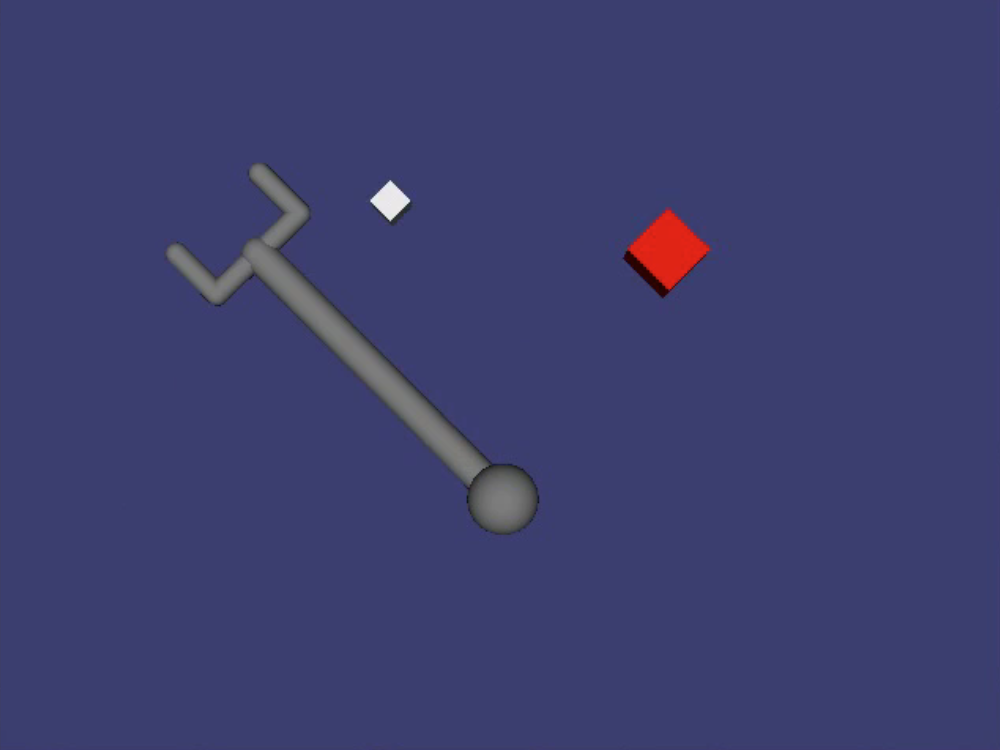}}
        \put(-0.5, 0.0){\includegraphics[width=0.49\columnwidth]{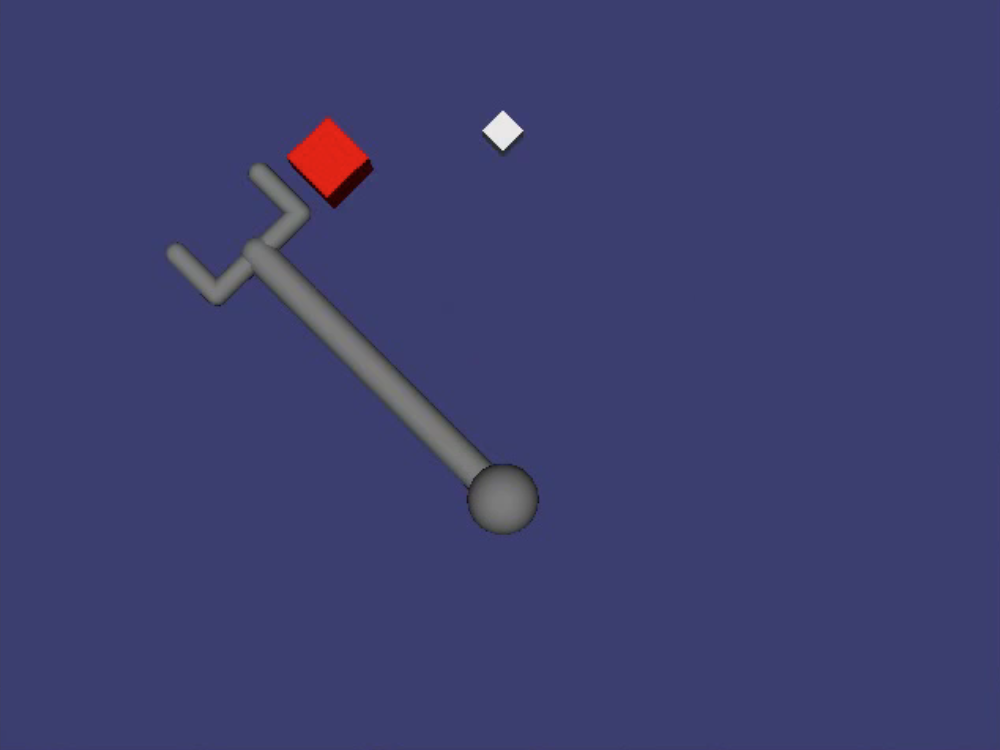}}
        \put(0.5, 0.0){\includegraphics[width=0.49\columnwidth]{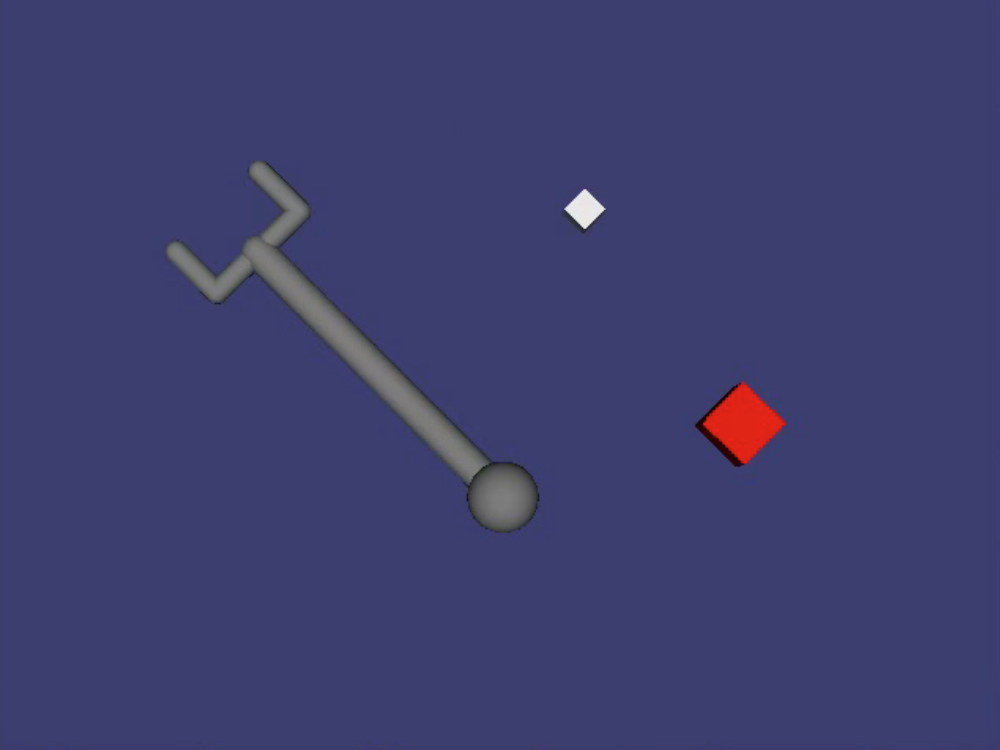}}
    \end{picture}
    \caption{The initial conditions for the gripper pusher task that we train TVLG policies on. The top left and bottom right conditions are more difficult due to the distance from the block to the goal and the configuration of the arm. The top left condition results are reported in Section~\ref{sec:simresults}.}
    \label{fig:gripper-conds}
\end{figure}

All of the tasks involve varying conditions for which we train one TVLG policy per condition and, for reacher and door opening, train a neural network policy to generalize across all conditions. For gripper pusher, the conditions vary the starting positions of the block and the goal, which can have a drastic effect on the difficulty of the task. Figure~\ref{fig:gripper-conds} illustrates the four initial conditions of the gripper pusher task for which we train TVLG policies. For reacher, analogous to Open\-AI Gym, we vary the initial arm configuration and position of the target and train TVLG policies from 16 randomly chosen conditions. Note that, while Open\-AI Gym randomizes this initialization per episode, we always reset to the same condition when training TVLG policies as this is an additional assumption we impose. However, when we test the performance of the neural network policy, we collect 300 test episodes with random initial conditions. For the door opening task, we initialize the robot position within a small square in the ground plane. We train TVLG policies from the four corners of this square and test our neural network policies with 100 test episodes from random positions within the square.

For the gripper pusher and door opening tasks, we train TVLG policies with PILQR, LQR-FLM and PI$^2$ with 20 episodes per iteration per condition for 20 iterations. In Appendix~\ref{app:sim_res}, we also test PI$^2$ with 200 episodes per iteration. For the reacher task, we use 3 episodes per iteration per condition for 10 iterations. Note that we do not collect any additional samples for training neural network policies. For the prior methods, we train DDPG with 150 episodes per epoch for 80 epochs on the reacher task, and TRPO uses 600 episodes per iteration for 120 iterations. On door opening, TRPO uses 400 episodes per iteration for 80 iterations and DDPG uses 160 episodes per epoch for 100 epochs, though note that DDPG is ultimately not successful.

\subsubsection{Real Robot Experiments}
\label{app:real_setup}

For the real robot tasks we use a hybrid cost function that includes two loss terms of the form of Eq.~\ref{eq:cost}. The first loss term $\ell_{arm}(\mathbf{z})$ computes the difference between the current position of the robot's end-effector and the position of the end-effector when the hockey stick is located just in front of the puck. We set $\alpha = 0.1$ and $\beta = 0.0001$ for this cost function. The second loss term $\ell_{goal}(\mathbf{z})$ is based on the distance between the puck and the goal that we estimate using a motion capture system. We set $\alpha = 0.0$ and $\beta = 1.0$. Both $\ell_{arm}$ and $\ell_{goal}$ have a linear ramp, which makes the cost increase towards the end of the trajectory. In addition, We include a small torque cost term $\ell_{torque}$ to penalize unnecessary high torques. The combined function sums over all the cost terms: $\ell_{total} = 100.0 \, \ell_{goal} + \ell_{arm} + \ell_{torque}$. We give a substantially higher weight to the cost on the distance to the goal to achieve a higher precision of the task execution.

Our neural network policy includes two fully-connected hidden layers of rectified linear units (ReLUs). Each of the hidden layers consists of 42 neurons. The inputs of the policy include: puck and goal positions measured with a motion capture system, robot joint angles, joint velocities, the end-effector pose and the end-effector velocity. During PILQR-MDGPS training, we use data augmentation to regularize the neural network. In particular, the observations were augmented with Gaussian noise to mitigate overfitting to the noisy sensor data. 

\subsection{Additional Simulation Results}
\label{app:sim_res}

\begin{figure}
\centering
\includegraphics[width=0.99\columnwidth]{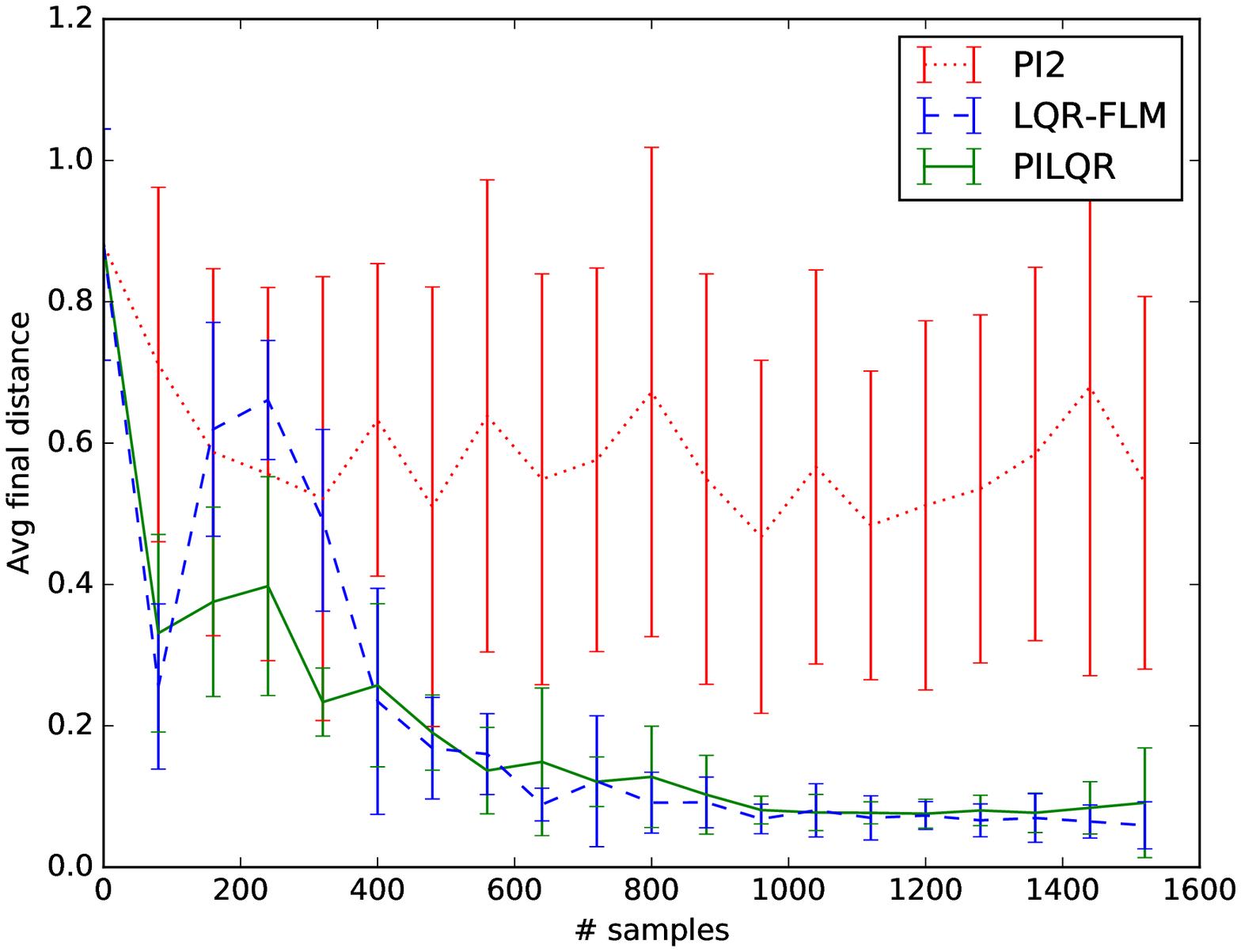}\\
\includegraphics[width=0.99\columnwidth]{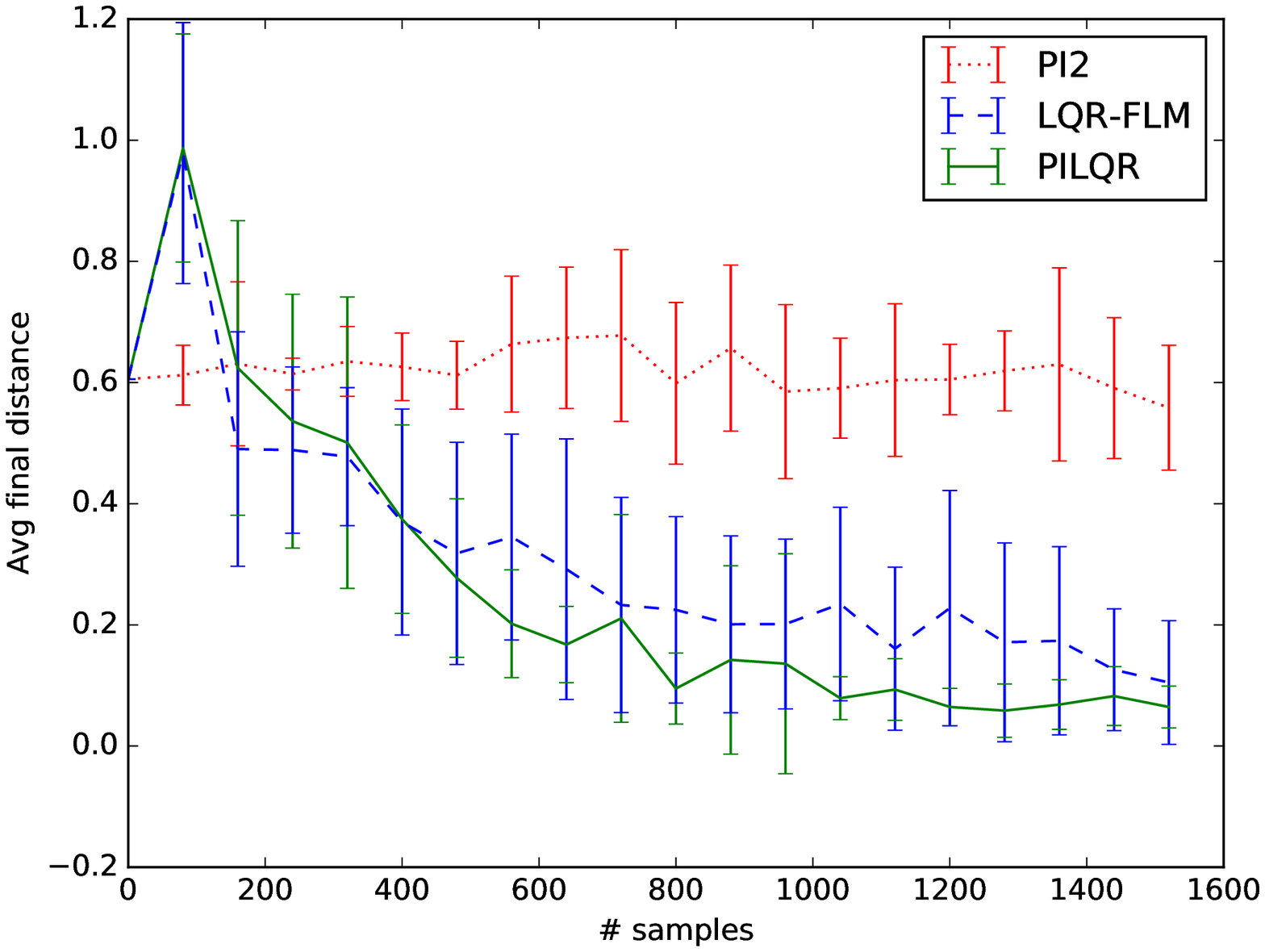}\\
\includegraphics[width=0.99\columnwidth]{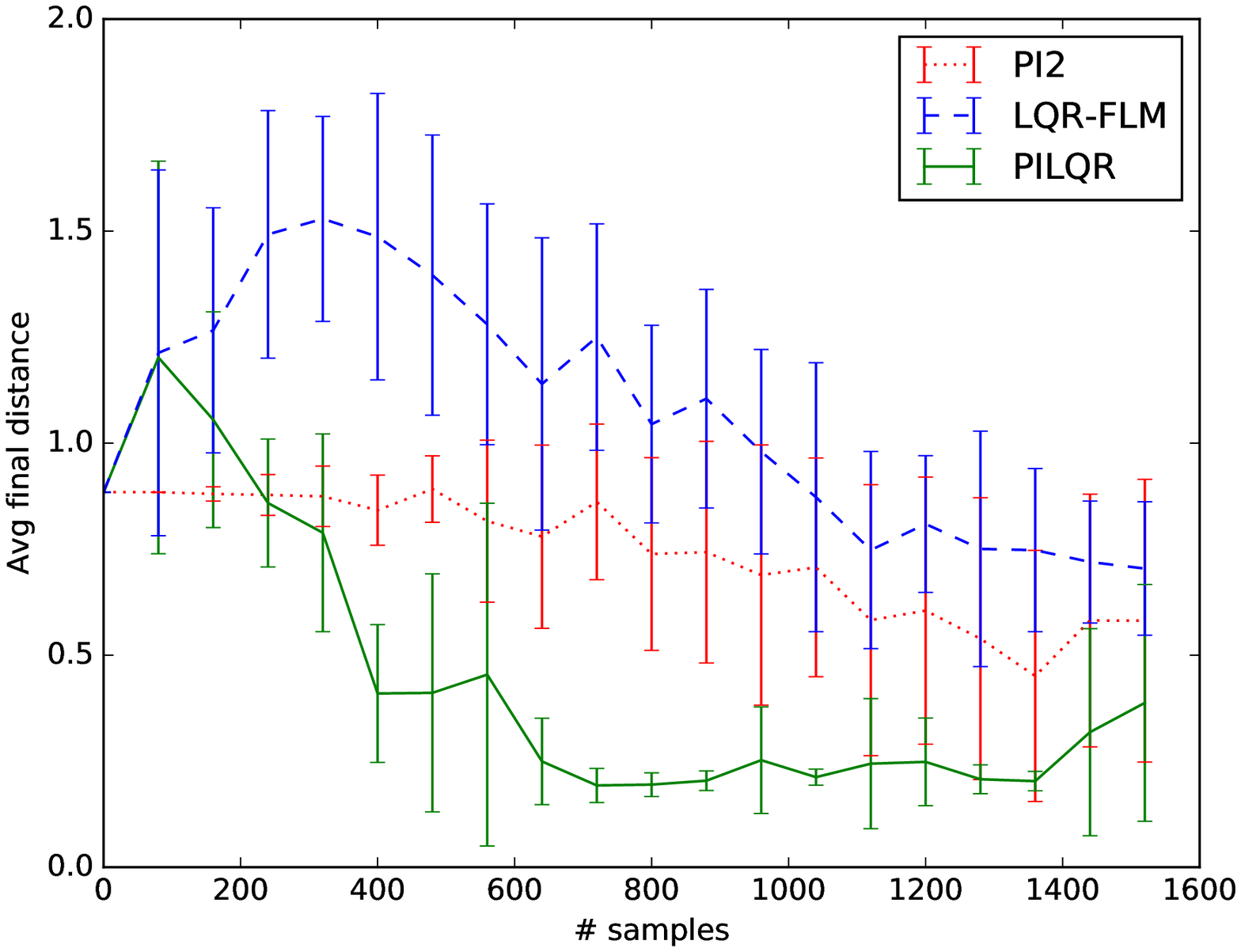}\\
\caption{Single condition comparisons of the gripper-pusher task in three additional conditions. The top, middle, and bottom plots correspond to the top right, bottom right, and bottom left conditions depicted in Figure~\ref{fig:gripper-conds}, respectively. The PILQR method outperforms other baselines in two out of the three conditions. The conditions presented in the top and middle figure are significantly easier than the other conditions presented in the paper.}
\label{fig:additional-gripper}
\end{figure}

Figure~\ref{fig:additional-gripper} shows additional simulation results obtained for the gripper pusher task for the three additional initial conditions.
The instances presented here are not as challenging as the one reported in the paper. 
Our method (PILQR) is able to outperform other baselines except for the first two conditions presented in the first rows of Figure~\ref{fig:additional-gripper}, where LQR-FLM performs equally well due to the simplicity of the task. PI$^2$ is not able to make progress with the same number of samples, however, its performance on each condition is comparable to LQR-FLM when provided with 10 times more samples. 

We also test PI$^2$ with 10 times more samples on the reacher and door opening tasks. On the reacher task, PI$^2$ improves substantially with more samples, though it is still worse than the four other methods. However, as Figure~\ref{fig:more-door} shows, PI$^2$ is unable to succeed on the door opening task even with 10 times more samples. The performance of PI$^2$ is likely to continue increasing as we provide even more samples.

\begin{figure}
\centering
\includegraphics[width=0.99\columnwidth]{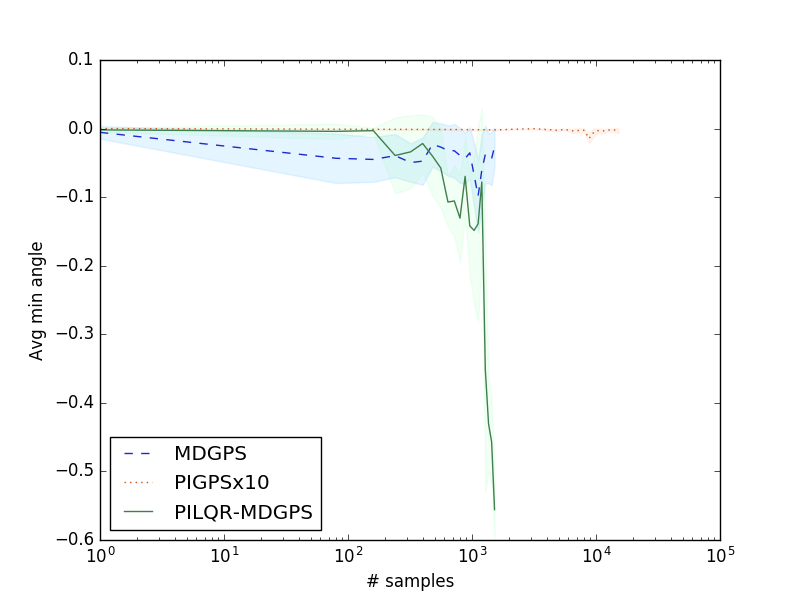}
\caption{Additional results on the door opening task.}
\label{fig:more-door}
\end{figure}

\end{document}